\theoremstyle{plain}
\newtheorem{theorem}{Theorem}
\newtheorem{lemma}[theorem]{Lemma}
\newtheorem{corollary}[theorem]{Corollary}
\theoremstyle{definition}
\newtheorem{definition}{Definition}
\newtheorem{example}{Example}
\theoremstyle{remark}
\newcolumntype{C}[1]{>{\centering\arraybackslash}p{#1}}
\begin{document}

\title{Ensemble Recognition in Reproducing Kernel Hilbert Spaces through Aggregated Measurements}

\author{Wei~Miao,~
        Gong~Cheng,~
        and~Jr-Shin~Li
\thanks{The authors are with the Department
of Electrical and Systems Engineering, Washington University in St. Louis, Saint Louis,
MO, 63130 USA e-mail: {\texttt \{weimiao, gong.cheng, jsli\}@wustl.edu}}}


\maketitle
\begin{abstract}
  In this paper, we study the problem of learning dynamical properties of ensemble systems from their collective behaviors using statistical approaches in reproducing kernel Hilbert space (RKHS). Specifically, we provide a framework to identify and cluster multiple ensemble systems through computing the maximum mean discrepancy (MMD) between their aggregated measurements in an RKHS, without any prior knowledge of the system dynamics of ensembles. Then, leveraging the gradient flow of the newly proposed notion of aggregated Markov parameters, we present a systematic framework to recognize and identify an ensemble systems using their linear approximations. Finally, we demonstrate that the proposed approaches can be extended to cluster multiple unknown ensembles in RKHS using their aggregated measurements. Numerical experiments show that our approach is reliable and robust to ensembles with different types of system dynamics.
\end{abstract}

\begin{IEEEkeywords}
  Ensemble systems, linear realization, system recognition
\end{IEEEkeywords}

\IEEEpeerreviewmaketitle

\section{Introduction}
The study of ensemble systems, which are collections of dynamical systems parameterized by a continuous variable, has drawn much attention due to its broad applicability in diverse scientific areas. Notable examples include exciting a population of nuclear spins on the order of Avogadro's number in nuclear magnetic resonance (NMR) spectroscopy and imaging \cite{glaser1998unitary,li2011optimal}, spiking population of neurons to alleviate brain disorders such as Parkinson's disease \cite{brown2004phase,li2013control,kafashan2015optimal}, manipulating a group of robots under model perturbation \cite{becker2012approximate}, creating synchronization patterns in a network of coupled oscillators \cite{rosenblum2004controlling,Li_NatureComm16}, and explaining the functionality of neural networks, especially ultra-deep neural networks \cite{weinan2017proposal, tabuada2020universal}.

During the past decade, a rich amount of work has been developed focusing on system-theoretic analysis of ensemble systems. It has been shown that various specialized techniques such as polynomial approximation \cite{li2011ensemble}, separating points \cite{li2019separating}, representation theory \cite{chen2019structure},complex functional analysis \cite{helmke2014uniform,schonlein2016controllability,dirr2018uniform}, statistical moment-based approaches \cite{zeng2016moment,zeng2017sampled, kuritz2018ensemble}, and convex-geometric approaches \cite{miao2020convexgeometric} are nontrivially connected to analyzing ensemble controllability and ensemble observability. Apart from investigating fundamental properties of ensemble systems, customized approaches are proposed to design feasible and optimal control laws for linear \cite{Li_ACC12_SVD, miao2020geometric,tie2017explicit, zeng2018computation,miao2020numerical}, bilinear \cite{Li_SICON17,Li_Automatica18}, as well as specific types of nonlinear ensemble systems \cite{li2013control}.

One unique characteristic of ensemble systems is that the control and measurements can be collected only from a population level. This is because there are too many systems in the ensemble so that precisely tracking and applying feedback to each system in the ensemble is not feasible. As a consequence, the control signal is broadcast to all systems in the ensemble, and the measurements are aggregated across systems which represent collective behaviors of the ensemble. In many emerging scientific areas involving ensemble systems, it is of great interest to learn dynamical properties of ensemble systems in a data-driven manner, especially through collective behaviors characterized by so-called `aggregated measurements'. However, due to the unexplored mathematical structures, classical control-theoretic tool needs to be upgraded to address new challenges stemming from ensemble systems with aggregated measurements.

Specifically, by perturbing ensemble systems with random control signals, we can compare whether two ensemble systems possess similar collective behavior through computing the maximum mean discrepancy (MMD) between their aggregated measurements, without any prior knowledge of the system dynamics of ensembles. Then, leveraging on a gradient flow of the newly proposed notion of aggregated Markov parameters, we present a systematic framework to recognize and identify an ensemble systems using their linear approximations. Finally, we demonstrate that the proposed approaches can be extended to cluster multiple unknown ensembles in RKHS using their aggregated measurements.

In this paper, we study the problem of learning dynamical properties of ensemble systems from their collective behaviors using statistical approaches in reproducing kernel Hilbert space (RKHS). This paper is organized as follows. In Section \ref{sec: prelim}, we provide preliminary knowledge on ensemble systems, their aggregated measurements, reproducing kernel Hilbert space and two-sample test in RKHS. In Section \ref{sec: method}, we introduce the main result of this paper. Specifically, by perturbing ensemble systems with random control signals, we compare whether two ensemble systems possess similar collective behavior through computing the maximum mean discrepancy (MMD) between their aggregated measurements, without any prior knowledge of the system dynamics of ensembles. Then, leveraging on a gradient flow of the newly proposed notion of aggregated Markov parameters, we present a systematic framework to recognize and identify an ensemble systems using their linear approximations. Finally, we demonstrate that the proposed approaches can be extended to cluster multiple unknown ensembles in RKHS using their aggregated measurements. In Section \ref{sec: experiments}, we provide various examples to show the efficacy of our proposed ensemble recognition and clustering approach.

\section{Preliminaries} \label{sec: prelim}

\subsection{Ensemble Systems and Their Aggregated Measurements}

Consider an ensemble of dynamical systems indexed by $\beta$ over a compact set $K \subset \mathbb{R}$, given by
\begin{equation}
  \label{equ: ensemble definition}
  \begin{aligned}
    \Sigma: \frac{\mathrm{d}}{\mathrm{d}t} X(t, \beta) &= F(t, \beta, X(t, \beta), u(t)), \\
    X(0, \beta) &= X_0(\beta),
  \end{aligned}
\end{equation}
where $X(t, \cdot )\in L^2(K, \mathbb{R}^n)$ is the state, an $n$-tuple of $L^2$-functions over $K$ for each $t\in [0, T]$ with $T\in (0, \infty)$; $F$ and $G$ are $\mathbb{R}^n$-valued and $\mathbb{R}^q$-valued smooth functions, respectively; and $u\in \mathcal{U}$ is the $\mathbb{R}^m$-valued control signal, where $\mathcal{U}$ denotes the admissible control set; $Y(t, \cdot)$ is the observation of the ensemble. Such a population of dynamical systems defined on the space of functions as in \eqref{equ: ensemble definition}, is called an ensemble system (or an ensemble as an abbreviation). The unique characteristic of an ensemble system is that the control can only be implemented on a population level. Namely, the control signal $u(t)$ is broadcast to all the systems in the ensemble and therefore is independent from $\beta$. This is mainly because the number of systems in the ensemble can be exceedingly large so that feedback control on each system is no longer applicable.

Besides the under-actuated nature arising from the broadcast control signals, many ensemble systems also suffer from inadequate measuring techniques so that precisely tracking each system is not possible. In these cases, the index of each system, i.e., $\beta$, is not available for observation. As a result, the states of the ensemble, i.e., $X(t, \beta)$, can observed only through the so called ``aggregated measurements'' taking form of
\begin{equation}
  \label{equ: aggregated measurements definition}
  Y(t) = \int_K G(t, \beta, X(t, \beta), u(t)) \,\mathrm{d}\beta,
\end{equation}
where $G$ is an $\mathbb{R}^q$-valued smooth function, called the ``aggregated observation function''. To fix ideas, we provide three examples of aggregated measurements in practical applications involving ensemble systems.

\begin{example}[Measuring temperature]
  Let us consider a steel plate that is being uniformly heated from the bottom. In steel-making procedures, it is essential to estimate the temperature distribution of the plate, denoted as $\rho(t, x)$ in real-time using a thermometer, where $x$ denotes the position of an infinitesimal element of the steel plate. In this case, the temperature distribution $\rho(t, x)$ can be considered as the state variable of an ensemble system indexed by $x$. However, the measurement collected from a thermometer placed at $x=x_0$, given by
  \[
    y(t) = \frac{1}{2\epsilon} \int_{x_0-\epsilon}^{x_0+\epsilon} \rho(\sigma, t)\,\mathrm{d}\sigma
  \]
  is an aggregated measurement of the temperature distribution since the probe of thermometer (drawn as black object in Figure~\ref{fig: aggregated measurement demo temperature}) has a finite width of $2\epsilon$.
  \begin{figure}[!h]
    \centering
    \includegraphics[width=0.8\linewidth]{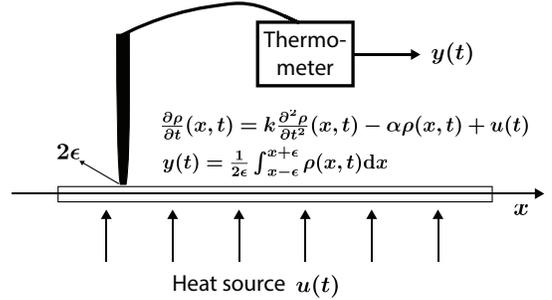}
    \caption{A demonstration of aggregated measurements by measuring temperature distribution of a steel plate using a thermometer.}
    \label{fig: aggregated measurement demo temperature}
  \end{figure}
\end{example}

\begin{example}[Torque by a hinged rod]
  Let us consider a rod of length $L$ that is horizontally placed with one end hinged on a vertical wall, as shown in Figure~\ref{fig: aggregated measurement demo}. We index each infinitesimally small element of the rod using its horizontal distance to the wall, denoted as $\beta$. Then, the linear densities $\lambda(t, \beta)$ of the rod can be considered as the state variable of an ensemble system indexed by $\beta$. Estimating $\lambda(t, \beta)$ is critical in many applications of engineering mechanics. However, in practice, it is impossible to measure the linear density $\lambda$ due to the inability of probing infinitesimal elements of the rod. Instead, realizable measurements are ``aggregated'' over all elements on the rod. For instance, we can measure the torque applied on the hinge by all elements on the rod, denoted as $\tau(t)$, through setting a torque wrench on the hinge. In this case, $\tau(t)$ is given by
  \[
    \tau(t) = \int_{0}^{L} \beta\lambda(t, \beta)g\, \mathrm{d}\beta,
  \]
  where $g$ is the acceleration due to the gravity. In this case, $\tau(t)$ can be considered as an aggregated measurement of $\lambda(t, \beta)$, where $Y(t, \beta) = \beta\lambda(t, \beta)g$.
  \begin{figure}[htbp]
    \centering
    \includegraphics[width=0.6\linewidth]{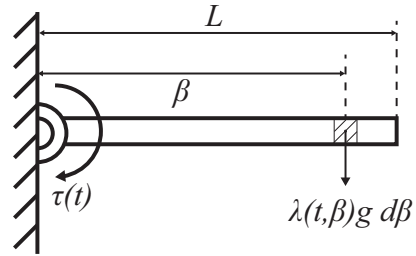}
    \caption{A demonstration of aggregated measurements by considering the torque applied by a hinged rod.}
    \label{fig: aggregated measurement demo}
  \end{figure}
\end{example}

\begin{example}[Statistical moments]
    In many applications involving ensemble systems, although the state variable $X(t, \beta)$ can be observed, one cannot distinguish each system in the ensemble from another so that the index $\beta$ is no longer available. In these cases, the measurements avaialable to users are the snapshot-type measurements shown in bottom of Figure \ref{fig: aggregated measurement demo statistical moments}. In order to use these snapshots to characterize the dynamics of ensemble systems, one may treat $X(t, \beta)$ as a random variable over $\beta$ and compute the statistical moments of $X(t, \beta)$, i.e.,
    \[
      Y(t) = \int_K X^{\alpha} (t, \beta) \,\mathrm{d}\mathbb{F}\beta,
    \]
    where $\mathbb{F}$ is a prior distribution defined on $K$. If the ensemble is finite, i.e., $K$ consists of finite number of points denoted as $\beta_i, i =1, \ldots, N$, then the prior distribution $\mathbb{F}$ is concentrated at $\beta_i, i = 1, \ldots, N$. If additionally we assume no prior information about how the index parameter $\beta$ distributes is available, then all systems in the ensemble are equally important so that $\mathbb{F}$ should be chosen as a uniform distribution on $K$. Hence the aggregated measurement boils down to
  \begin{equation}
    \label{equ: statistical moments discrete}
    Y(t) = \frac{1}{N} \sum_{i=1}^N X^{\alpha}(t, \beta_i).
  \end{equation}
  \vspace{-3mm}
  \begin{figure}[htbp]
    \centering
    \includegraphics[width=0.75\linewidth]{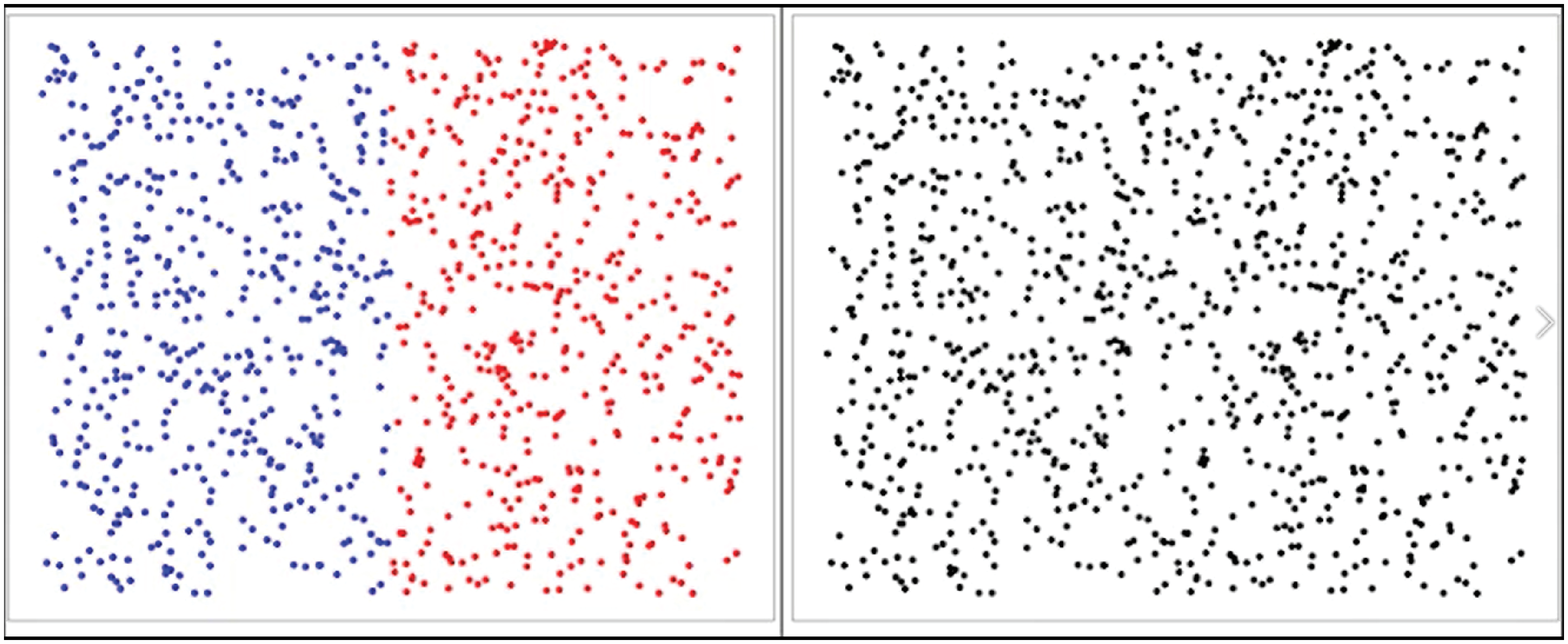}
    \caption{A demonstration of aggregated measurements by computing the statistical moments of snapshots.}
    \label{fig: aggregated measurement demo statistical moments}
  \end{figure}
\end{example}

With the notion of aggregated measurements, typical problems involving ensemble systems include recognizing the dynamics of an ensemble, i.e., $F$ in \eqref{equ: ensemble definition}, and identifying similarity among ensemble systems without knowledge of their dynamics through aggregated measurements.

\subsection{RKHS and Kernel Two-Sample Test} \label{subsec: RKHS MMD}

In this section, we briefly review related concepts of RKHS and kernel two-sample test \cite{gretton12a}, which will be used for recognizing and clustering ensemble systems with unknown dynamics. 

Let us consider an arbitrary non-empty set $\mathcal{X}$ and let $\mathcal{H}$ be a Hilbert space of real-valued functions on $X$. We define the Reproducing Kernel Hilbert Space and its reproducing kernel as follows:

\begin{definition}
  \label{def: RKHS}
  A Hilbert space $\mathcal{H}$ of functions $f: \mathcal{X}\to \mathbb{R}$ is called a \emph{Reproducing Kernel Hilbert Space} (RKHS) if for all $x\in X$, the evaluation functional $L_{x}: \mathcal{H}\to \mathbb{R}$, $f\mapsto f(x)$ is a bounded operator on $\mathcal{H}$.
\end{definition}


\begin{definition}
  \label{def: kernel}
	Let $\mathcal{X}$ be non-empty. A function $\rho:\mathcal{X}\times \mathcal{X}\to \mathbb{R}$ is called a kernel on $\mathcal{X}$ if it is continuous and symmetric, i.e., $\rho(x_1,x_2) = \rho(x_2, x_1)$ for all $s_1, s_2\in \mathcal{X}$. Furthermore, a kernel is said to be positive definite if $\sum_{i,j=1}^n c_{i}c_{j}\rho(x_{i},x_{j})\geqslant 0$ for any $n\in \mathbb{Z}_{+}$, $c_1, \ldots, c_n\in \mathbb{R}$, and $x_1, \ldots, x_n\in \mathcal{X}$.
\end{definition}

\begin{definition}[reproducing kernel]
  \label{def: reproducing kernel}
  Let $\mathcal{H}$ be an RKHS. A kernel $\rho: \mathcal{X}\times \mathcal{X}\to \mathbb{R}$ on $\mathcal{X}$ is called a \emph{reproducing kernel} of $\mathcal{H}$ if
  \begin{enumerate}[label=(\roman*)]
    \item for any $x\in \mathcal{X}$, $\rho( \cdot , x) \in \mathcal{H}$;
    \item for any $x\in \mathcal{X}$ and $f\in \mathcal{H}$, $f(x) = \langle f( \cdot ), \rho(\cdot , x) \rangle_{\mathcal{H}}$.
  \end{enumerate}
\end{definition}

It is easy to show that, by the Riesz representation theorem, every RKHS possesses a reproducing kernel. Conversely, we can also define an RKHS in terms of its reproducing kernel. The following theorem, known as the Moore-Aronszajn theorem, reveals the intimate connections among notions of positive definite kernel, reproducing kernel, and RKHS.

\begin{theorem}[Moore-Aronszajn]
	Let $\mathcal{X}$ be non-empty and $\rho:\mathcal{X}\times \mathcal{X}\to \mathbb{R}$ be a positive definite kernel. There exists a unique RKHS with $\rho$ being its reproducing kernel.
\end{theorem}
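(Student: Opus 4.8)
The plan is to give a constructive proof and then establish uniqueness separately. First I would build a pre-Hilbert space from the kernel sections. Define $\mathcal{H}_0 := \operatorname{span}\{\rho(\cdot, x) : x \in \mathcal{X}\}$, the space of all finite linear combinations $f = \sum_{i=1}^n a_i \rho(\cdot, x_i)$, and equip it with the bilinear form
\[
  \langle f, g\rangle := \sum_{i,j} a_i b_j\, \rho(x_i, y_j), \qquad f = \sum_i a_i \rho(\cdot, x_i),\; g = \sum_j b_j \rho(\cdot, y_j).
\]
The routine checks are that this form is well defined (independent of the chosen representations, which one sees by rewriting it as $\sum_i a_i\, g(x_i) = \sum_j b_j\, f(y_j)$), that it is symmetric and bilinear, and that it is positive semidefinite---the last being exactly the positive-definiteness hypothesis on $\rho$ in Definition~\ref{def: kernel}. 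At this stage the reproducing identity $\langle f, \rho(\cdot, x)\rangle = f(x)$ already holds on $\mathcal{H}_0$ by construction.

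The key step that upgrades the semidefinite form to a genuine inner product is Cauchy--Schwarz combined with the reproducing identity: for any $x$,
\[
  |f(x)|^2 = |\langle f, \rho(\cdot, x)\rangle|^2 \le \langle f, f\rangle\, \rho(x,x).
\]
Hence $\langle f, f\rangle = 0$ forces $f(x) = 0$ for every $x$, so $f$ is the zero function and $\langle\cdot,\cdot\rangle$ is definite, making $(\mathcal{H}_0, \langle\cdot,\cdot\rangle)$ a bona fide inner product space.

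Next I would complete $\mathcal{H}_0$, and I expect this to be the main obstacle, because the abstract metric completion produces equivalence classes of Cauchy sequences, and I must realize these as genuine functions on $\mathcal{X}$ while ensuring distinct functions are not identified. The displayed inequality provides the bridge: it shows every evaluation functional is bounded on $\mathcal{H}_0$ with constant $\sqrt{\rho(x,x)}$, so any Cauchy sequence $(f_n)$ in $\mathcal{H}_0$ is pointwise Cauchy and converges to a well-defined limit $f(x) := \lim_n f_n(x)$. I would take $\mathcal{H}$ to be the set of all such pointwise limits, transport the inner product by continuity, and verify that the pointwise limit vanishes precisely when the $\mathcal{H}_0$-norm limit does, so the map from the abstract completion to this function space is a bijective isometry. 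Then $\rho(\cdot, x)\in\mathcal{H}$ (it already lies in $\mathcal{H}_0$), and the reproducing identity extends from $\mathcal{H}_0$ to all of $\mathcal{H}$ by continuity of both sides; boundedness of evaluation confirms $\mathcal{H}$ is an RKHS in the sense of Definition~\ref{def: RKHS} with reproducing kernel $\rho$.

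Finally, for uniqueness, suppose $\mathcal{H}'$ is any RKHS with reproducing kernel $\rho$. Property (i) of Definition~\ref{def: reproducing kernel} gives $\mathcal{H}_0 \subseteq \mathcal{H}'$, and property (ii) forces the inner product of $\mathcal{H}'$ to agree with $\langle\cdot,\cdot\rangle$ on $\mathcal{H}_0$. Moreover $\mathcal{H}_0$ is dense in $\mathcal{H}'$: if $g \in \mathcal{H}'$ is orthogonal to every $\rho(\cdot, x)$, then $g(x) = \langle g, \rho(\cdot, x)\rangle = 0$ for all $x$, whence $g = 0$. Thus $\mathcal{H}'$ is a completion of $\mathcal{H}_0$ under the same norm, and since completions are unique up to isometric isomorphism, $\mathcal{H}' = \mathcal{H}$ as spaces of functions carrying identical inner products.
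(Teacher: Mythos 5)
Your construction is the standard and correct proof of the Moore--Aronszajn theorem; the paper itself states this result without proof (it is a cited classical theorem), so there is no in-paper argument to diverge from. The one step you flag but do not carry out---that a Cauchy sequence in $\mathcal{H}_0$ whose pointwise limit vanishes must also vanish in norm, which is what makes the map from the abstract completion to the function space injective---is genuinely the crux of the completion argument; it follows by writing $\|f_n\|^2 = \langle f_n, f_n - f_m\rangle + \langle f_n, f_m\rangle$, bounding the first term by Cauchy--Schwarz and noting that the second is a finite linear combination of values $f_n(y_j)\to 0$ for each fixed $m$. With that detail supplied, the existence and uniqueness arguments are both complete.
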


One of the most significant results stemming from the concept of RKHS is the so-called ``representer theorem'' \cite{kimeldorf1971}, which states that the minimizer of the infinite-dimensional least-squared problem in an RKHS can be characterized by evaluating its reproducing kernel on a finite set of points. However, given an RKHS $\mathcal{H}$, it is in general difficult to identify its associated reproducing kernel. Nevertheless, owing to the Moore-Aronszajn theorem, most applications of RKHS start from a positive definite kernel $\rho$, and then solves a minimization problem over the unique RKHS induced by $\rho$, circumventing the challenge of computing the reproducing kernel of a given RKHS.

Besides the famous representer theorem, the past decade has witnessed the success of ``kernel two-sample test'' \cite{gretton12a} in an RKHS to identify the similarity between two probability distributions $\mathbb{P}$ and $\mathbb{Q}$ defined on $\mathcal{X}$ using i.i.d. samples from them. Specifically, given a reproducing kernel $\rho( \cdot , \cdot ) : \mathcal{X}\times \mathcal{X}\to \mathbb{R}$ and its induced RKHS $\mathcal{H}$, we define the \emph{kernel mean embedding} of a probability distribution $\mathbb{P}$, denoted by $\mu_\mathbb{P} \in \mathcal{H}$, s.t.
\[
  \mu_\mathbb{P}=\int\rho(\cdot, x)\,\mathrm{d}\mathbb{P}(x).
\]
  Through the kernel mean embedding, the similarity between $\mathbb{P}$ and $\mathbb{Q}$ can be measured by the \emph{maximum mean discrepancy} (MMD) defined by
\[
  \mathrm{MMD}^2(\mathbb{P}, \mathbb{Q}, \rho) = \|\mu_{\mathbb{P}} - \mu_{\mathbb{Q}}\|_{\mathcal{H}},
\]
which has an unbiased empirical estimate using i.i.d.\ samples $\{x_k\}_{k=1}^{n_1}$ and $\{y_k\}_{k=1}^{n_2}$ from $\mathbb{P}$ and $\mathbb{Q}$, respectively, given by
\begingroup
\allowdisplaybreaks
\begin{multline*}
  \widehat{\mathrm{MMD}}^2 (\mathbb{P}, \mathbb{Q}, \rho) = \frac{1}{n_1(n_1-1)} \sum_{i=1}^{n_1}\sum_{\substack{j=1\\ i\neq j}}^{n_1} \rho(x_i, x_j) + \\
  \frac{1}{n_2(n_2-1)}\sum_{i=1}^{n_2} \sum_{\substack{j=1\\ i\neq j}}^{n_2} \rho(y_i, y_j) - \frac{2}{n_1n_2} \sum_{i=1}^{n_1}\sum_{j=1}^{n_2} \rho(x_i , y_j).
\end{multline*}
\endgroup

It is proved in \cite{Cortes08} that when $\rho$ is a universal kernel, e.g., Gaussian-RBF-type kernel (see Definition~\ref{def: universal kernel} and Theorem~\ref{thm: universality.GRBF} in Appendix), then $\mathrm{MMD}^2(\mathbb{P}, \mathbb{Q}, \rho) = 0$ if and only if $\mathbb{P} = \mathbb{Q}$. This result lays the foundation for the framework we introduce in the next section, which allows us to distinguish or identify distributions by computing the $\widehat{\mathrm{MMD}}$ from their sample datasets using a universal kernel.

\section{Methodology}
\label{sec: method}

In this section, we establish the framework of recognizing an ensemble with unknown dynamics through aggregated measurements. The key point in this framework is to compare the distribution of aggregated measurements of the unknown ensemble with a baseline model in an RKHS by perturbing the ensemble system using random control signals. Leveraging the newly introduced concept of ``aggregated Markov parameter'', we also provide a systematic approach to compute the best linear baseline model for ensemble recognition. Furthermore, we demonstrate that the proposed framework can be generalized to cluster multiple ensembles with unknown dynamics.

\subsection{Recognition of Ensembles Using a Baseline Model}

To fix ideas, let us consider an ensemble index by $\beta\in K\subset \mathbb{R}$, where $K$ is compact, given by
\begin{equation}
  \label{equ: ensemble aggregated}
  \begin{aligned}
    \Sigma:\ \frac{\mathrm{d}}{\mathrm{d}t} &X(t, \beta)=F(t, \beta, X(t, \beta), u(t)), \\
    &X(0, \beta)=X_0(\beta), \\
    &Y(t)=\int_{K}G(t, \beta, X(t, \beta), u(t)) \mathrm{d}\beta,
  \end{aligned}
\end{equation}
where $X(t, \cdot)$ is the state; $u(t) \in \mathcal{U}\subset C([0, T], \mathbb{R}^m)$ with $\mathcal{U}$ being closed; $F$ is an unknown $\mathbb{R}^n$-valued smooth function; $G$ is a $\mathbb{R}^q$-valued smooth function, called the ``aggregated observation function'', yielding the aggregated measurement $Y(t)$.

In this work, we assume that $Y$ is in $\in C_1([0, T], \mathbb{R}^q)$, the space of continuously differentiable $\mathbb{R}^q$-valued functions over $[0, T]$, such that there exists $C>0$ satisfying $\|Y(t)\|_{L^{\infty}} < C$ and $\|\dot{Y}(t)\|_{L^{\infty }} < C$ for all $Y$. Hence, by the Arzel\`{a}-Ascoli theorem, the set of all possible aggregated measurements, denoted as
\begin{equation}
  \label{equ: possible.aggregated.measurements}
	\mathcal{F} := \{Y(t)\in C_1([0, T], \mathbb{R}^q): u(t) \in \mathcal{U}\}, 
\end{equation}
is a compact subset of $C_1([0, T], \mathbb{R}^q)$ under the uniform norm. Therefore, $\mathcal{F}$ is also compact as a subset of $L_2([0, T], \mathbb{R}^q)$.

Next, we discuss the recognition of an unknown ensemble using a baseline model. Assume the control signal $u(t)$ of the ensemble in \eqref{equ: ensemble aggregated} is subject to a distribution $\mathbb{U}$ defined on $\mathcal{U}$. Then, the aggregated measurement $Y(t)$ of $\Sigma$ is subject to a distribution defined on $\mathcal{F}$, denoted as $\mathbb{P}$, which is determined by the system dynamics of $\Sigma$, the aggregated observation function $G$, and the distribution of control inputs $\mathbb{U}$. Given a baseline ensemble $\Gamma$, whose distribution of aggregated measurements is denoted by $\mathbb{Q}$, we can recognize the unknown ensemble $\Sigma$ through detecting the similarity between $\mathbb{P}$ and $\mathbb{Q}$.

In particular, let us consider a reproducing kernel function $\rho(\cdot, \cdot): \mathcal{M} \times \mathcal{M} \to \mathbb{R}$ and its induced RKHS $\mathcal{H}$. Without loss of generality, we assume the two distributions $\mathbb{P}$ and $\mathbb{Q}$ over $\mathcal{F}$ satisfy
\[
  \int_{\mathcal{F}}\rho^{\frac{1}{2}}(s, s)\, \mathrm{d} \mathbb{P}(s) <\infty \ \text{and}\ \int_{\mathcal{F}}\rho^{\frac{1}{2}}(s, s)\, \mathrm{d} \mathbb{Q}(s) <\infty,
\]
so that the kernel mean embeddings of $\mathbb{P}$ and $\mathbb{Q}$ are well-defined (see Lemma~\ref{lem: existence of kernel mean embedding} in Appendix). Hence, the MMD in $\mathcal{H}$ between $\mathbb{P}$ and $\mathbb{Q}$ can be evaluated numerically using i.i.d.\ samples from these two distributions. Specifically, we first draw independent random control signals $\{u^{(k)}\}_{k=1}^{n_1}$ that are subject to a fixed control input distribution $\mathbb{U}$, and then apply these control signals to the unknown ensemble $\Sigma$. The perturbations $u^{(k)}$ result in a collection of aggregated measurements $S_1 := \{Y_1^{(k)}(t)\}_{k=1}^{n_1}$, which are i.i.d.\ samples of the distribution $\mathbb{P}$. Similarly, we perform the same procedure to the baseline ensemble $\Gamma$: we first draw independent control signals $\{u^{(k)}\}_{k=1}^{n_2}$ that are subject to the same distribution $\mathbb{U}$, and then apply them to $\Gamma$ to obtain another collection of aggregated measurements $S_2:=\{Y_2^{(k)}(t)\}_{k=1}^{n_2}$, which are i.i.d.\ samples of the distribution $\mathbb{Q}$. Then, the empirical MMD between $\mathbb{P}$ and $\mathbb{Q}$ computed using $S_1$ and $S_2$, denoted as $h(S_1, S_2)$, is given by
\begin{equation}
  \label{equ: empirical MMD}
  \begin{aligned}
    h(S_1, &S_2):=\frac{1}{n_1(n_1-1)}  \sum_{i=1}^{n_1} \sum_{\substack{j = 1\\j\neq i}}^{n_1}\rho(Y_1^{(i)}(t), Y_1^{(j)}(t)) \\
      &+ \frac{1}{n_2(n_2-1)}\sum_{i=1}^{n_2}\sum_{\substack{j = 1\\j\neq i}}^{n_2} \rho(Y_2^{(i)}(t), Y_2^{(j)}(t)) \\
      & - \frac{2}{n_1n_2} \sum_{i=1}^{n_1} \sum_{j=1}^{n_2} \rho(Y_1^{(i)}(t), Y_2^{(j)}(t)),     
  \end{aligned}  
\end{equation}
which is an unbiased estimate of $\mathrm{MMD}^2(\mathbb{P}, \mathbb{Q}, \rho)$. Since $\mathrm{MMD}^2(\mathbb{P}, \mathbb{Q}, \rho) = 0$ implies $\mathbb{P} = \mathbb{Q}$, given that $\rho$ is a universal kernel, we call the unknown ensemble $\Sigma$ \emph{dynamically equivalent} to the baseline ensemble $\Gamma$ if $h(S_1, S_2) = 0$.

In practice, we may run a test on the null-hypothesis of $\mathbb{P} = \mathbb{Q}$. Without loss of generality, we assume $n_1 = n_2 = I$. Then, we have the following corollary on the number of sampled trajectories to falsely reject the null-hypothesis with low probability.

\begin{corollary}
	Given a non-empty set $\mathcal{F}$ of aggregated measurements, a reproducing kernel $\rho:\mathcal{F}\times \mathcal{F} \to \mathbb{R}$ satisfying $0 < \rho(x, y) < C$ for some constant $C>0$ and $x, y\in \mathcal{F}$, and two collections of i.i.d.\ samples $S_1 := \{x_k\}_{k=1}^I$ and $S_2 := \{y_k\}_{k=1}^I$ from two independent distributions $\mathbb{P}$ and $\mathbb{Q}$ defined on $\mathcal{F}$, respectively, the number of samples required, i.e., $I$, to falsely reject the null-hypothesis $\mathbb{P} = \mathbb{Q}$ with probability lower than $\alpha $ up to $\epsilon$-error, is given by
  \[
	 	I > -\frac{16C^2}{\epsilon^2} \ln \alpha.
	\]
	Furthermore, given a fixed $I > 0$, the acceptance region of $\mathbb{P} = \mathbb{Q}$ at significance level of $\alpha$, is given by
	\begin{equation}
    \label{equ: MMD hypothesis test criterion}
	  h(S_1, S_2) \leq 4C\sqrt{\frac{-\ln \alpha}{I}}.
	\end{equation}
\end{corollary}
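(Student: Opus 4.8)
The plan is to treat the empirical estimate $h(S_1, S_2)$ in \eqref{equ: empirical MMD} as a function of the $2I$ independent random variables $x_1, \dots, x_I, y_1, \dots, y_I$ and apply McDiarmid's bounded-differences inequality. The crucial observation is that $h$ is an unbiased estimator of $\mathrm{MMD}^2(\mathbb{P}, \mathbb{Q}, \rho)$, so under the null hypothesis $\mathbb{P} = \mathbb{Q}$ we have $\mathbb{E}[h(S_1, S_2)] = 0$. Consequently, falsely rejecting the null at a threshold $t$ amounts to the one-sided deviation event $\{h(S_1, S_2) \geq t\}$, whose probability is exactly what McDiarmid's inequality controls.

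First I would establish the bounded-differences coefficients. Fix all samples and replace a single point, say $x_k$ by $x_k'$; only the first sum and the cross term of \eqref{equ: empirical MMD} are affected. In the first sum the replacement touches the $2(I-1)$ summands $\rho(x_k, x_j)$ and $\rho(x_j, x_k)$ with $j \neq k$, each weighted by $1/[I(I-1)]$ and each perturbed by at most $C$ since $0 < \rho < C$; this contributes at most $2(I-1) \cdot C / [I(I-1)] = 2C/I$. In the cross term the replacement touches the $I$ summands $\rho(x_k, y_j)$, each weighted by $2/I^2$ and perturbed by at most $C$, contributing at most $2C/I$. Hence replacing one $x_k$ changes $h$ by at most $4C/I$, and by the symmetry of \eqref{equ: empirical MMD} the same bound holds for replacing any $y_k$. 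Thus every one of the $2I$ coordinates admits the bounded-difference constant $c = 4C/I$.

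Next I would substitute these into McDiarmid's inequality. With $\sum_i c_i^2 = 2I (4C/I)^2 = 32C^2/I$, the one-sided tail bound reads
\[
  \mathbb{P}\bigl(h(S_1, S_2) - \mathbb{E}[h] \geq t\bigr) \leq \exp\left(-\frac{2t^2}{32C^2/I}\right) = \exp\left(-\frac{It^2}{16C^2}\right).
\]
Under the null $\mathbb{E}[h] = 0$, so the probability of a false rejection at threshold $t$ is at most $\exp(-It^2/(16C^2))$. Setting the rejection threshold equal to the tolerated error $t = \epsilon$ and requiring this bound to fall below $\alpha$ yields $\exp(-I\epsilon^2/(16C^2)) \leq \alpha$, i.e.\ $I > -16C^2 \ln\alpha / \epsilon^2$, which is the first claim. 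For the second claim I would instead fix $I$ and solve $\exp(-It^2/(16C^2)) = \alpha$ for the critical threshold $t = 4C\sqrt{-\ln\alpha / I}$; accepting the null whenever $h$ stays below this value is precisely the acceptance region \eqref{equ: MMD hypothesis test criterion} at significance level $\alpha$.

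The step I expect to demand the most care is the bounded-differences computation, since one must correctly count how many summands each sample enters and track the differing normalizations $1/[I(I-1)]$ and $2/I^2$ of the within-sample and cross terms; an off-by-a-factor there would propagate into the constant $16C^2$ and invalidate both displayed formulas. Everything downstream is a direct substitution into McDiarmid's inequality together with elementary algebra.
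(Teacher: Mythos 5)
Your proposal is correct and follows essentially the same route as the paper: both reduce the claim to the one-sided tail bound $\mathrm{Pr}(h(S_1,S_2) > \epsilon) \leq \exp\bigl(-\epsilon^2 I/(16C^2)\bigr)$ under the null and then perform the same algebra to extract the sample-size condition and the acceptance region. The only difference is that the paper simply cites this concentration inequality (via Hoeffding), whereas you derive it explicitly through McDiarmid's bounded-differences inequality, and your coefficient computation ($4C/I$ per coordinate, hence $\sum_i c_i^2 = 32C^2/I$) is accurate.
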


\begin{proof}
	As proved in \cite{hoeffding1994probability}, it holds that
	\[
		\mathrm{Pr} (h(S_1, S_2) > \epsilon) \leq \exp\Bigl(-\frac{\epsilon^2 I}{16C^2}\Bigr),
	\]
	given the null-hypothesis $\mathbb{P} = \mathbb{Q}$. Let
  \[
    \mathrm{Pr}(h(S_1, S_2) > \epsilon) \leq \exp\Bigl(-\frac{\epsilon^2 I}{16C^2}\Bigr) < \alpha,
  \]
  then we have $I > -\tfrac{16C^2}{\epsilon^2} \ln \alpha$ when $\epsilon$ and $\alpha$ are fixed; and $\epsilon > 4C \sqrt{(-\ln \alpha)/I}$ when $I$ and $\alpha $ are fixed. Therefore, we reject the null-hypothesis of $\mathbb{P} = \mathbb{Q}$ at a significance level of $\alpha $ when $h(S_1, S_2) > 4C \sqrt{(-\ln \alpha)/I}$, which yields the acceptance region as in \eqref{equ: MMD hypothesis test criterion}.
\end{proof}

\subsection{Aggregated Markov Parameters and Linear Ensemble Approximation}
\label{subsec: Aggregated Markov Parameters and Linear Ensemble Approximation}

In the last subsection, we have introduced a highly\hyp{}conceptualized framework to recognize an unknown ensemble system $\Sigma$ with aggregated measurements by comparing it with a baseline ensemble $\Gamma$ through computing MMD in an RKHS. However, this framework requires us to make an educated guess on the baseline model in order to accurately recognize the unknown ensemble. Therefore, it is critical to devise a systematic approach to adaptively update the baseline model, and thus improves the recognition results. In this subsection, we propose a novel concept of \emph{aggregated Markov parameters}, and provide a tractable procedure to design linear baseline models based on the flow of aggregated Markov parameters.

To be specific, given an unknown ensemble $\Sigma$ in \eqref{equ: ensemble aggregated} and a collection of random control signals $\{u^{(k)}(t)\}_{k=1}^N \subset \mathcal{U}$, we aim to find $A(\cdot )\in C(K, \mathbb{R}^{n\times n})$, $B(\cdot )\in C(K, \mathbb{R}^{n\times m})$, and $C(\cdot )\in C(K, \mathbb{R}^{q \times n})$ to construct a baseline model in the following form,
\begin{equation}
  \label{equ: linear baseline model}
  \begin{aligned}
    \Gamma:\ \frac{\mathrm{d}}{\mathrm{d}t} &Z(t, \beta) = A(\beta)Z(\beta) + B(\beta)u(t), \\
      &Y(t) = \int_{K} C(\beta) Z(t, \beta) \mathrm{d}\beta,
  \end{aligned}
\end{equation}
which \emph{minimizes} $h(S_1, S_2)$ in \eqref{equ: empirical MMD}. Without loss of generality, we assume $Z(0, \beta) = 0$ for all $\beta\in K$. Hence, from the theory of linear system, the state trajectory of the linear ensemble in \eqref{equ: linear baseline model} under the control input $u^{(i)}(t)$ is given by
\begin{align*}
  Z^{(i)}(t, \beta) &= \int_0^t e^{A(\beta)(t-s)} B(\beta)u(s) \,\mathrm{d}s \\
    &= \int_0^t \sum_{j=0}^{\infty} \frac{A^j(\beta)(t-s)^j}{j!} B(\beta) u^{(i)}(s) \,\mathrm{d}s\\
    &= \sum_{j=0}^{\infty} \biggl\{\frac{1}{j!}A^j(\beta) B(\beta) \int_0^t (t-s)^j u^{(i)}(s) \,ds\biggr\}.
\end{align*}
So the aggregated measurement $Y^{(i)}(t)$ of $\Gamma$, under the control input $u^{(i)}(t)$, is given by
\begin{align*}
    Y_2^{(i)}&(t) = \int_K C(\beta)Z^{(i)}(t, \beta)\mathrm{d} \beta\\
    = &\sum_{j=0}^{\infty } \biggl\{\frac{1}{j!} \int_K C(\beta)A^j(\beta) B(\beta) \mathrm{d} \beta\int_0^t (t-s)^j u^{(i)} \,\mathrm{d}s \biggr\}.
\end{align*}

From the aggregated measurement $Y_2^{(i)}(t)$ above, we define the aggregated Markov parameters as follows.
\begin{definition}
  \label{def:aggregated.Markov}
  The \emph{aggregated Markov parameters} $\eta_j$ ($j\in \mathbb{N}$), for the baseline model $\eta$ in \eqref{equ: linear baseline model} is defined as
  \[
    \eta_j := \frac{1}{j!} \int_K C(\beta)A^j(\beta) B(\beta)\,\mathrm{d}\beta \in \mathbb{R}^{q\times m}
  \]
\end{definition}

If we denote $\Lambda_{ij}(t) := \int_0^t (t-s)^j u^{(i)}\,\mathrm{d}s$, then the aggregated measurement is written as $Y_2^{(i)}(t) = \sum_{j=0}^{\infty }\eta_j \Lambda_{ij}(t)$. We observe that $\Lambda_{ij}(t)$ ($i = 1, \ldots , N$ and $j\in \mathbb{N}$) is uniquely determined by $u^{(i)}(t)$ and independent from the dynamics of $\Gamma$. Therefore, the design of $A(\cdot ), B(\cdot )$, and $C(\cdot )$ boils down to the design of aggregated Markov parameters, i.e., $\eta_j$'s.

For ease of exposition, we assume that $m = q = 1$, since the multidimensional case where $m > 1$, $q>1$ can be addressed by designing each component of $\eta_j$. If we denote the sequence $\{\eta_j\}_{j=1}^{\infty}$ by $\eta$, and use $\Gamma(\eta)$ to denote the ensemble in \eqref{equ: linear baseline model} satisfying $\eta_j := \tfrac{1}{j!} \int_K C(\beta)A^j(\beta) B(\beta) \,\mathrm{d}\beta$, then by Lemma~\ref{lem: AMP.in.l2} in Appendix, it holds that $\eta = \{\eta_j\}_{j=0}^{\infty } \in \ell_2$ is square\hyp{}summable.

To design a collection of aggregated Markov parameters $\eta$ minimizing the MMD between $\Sigma$ and $\Gamma(\eta)$, we consider a gradient flow on $\eta$. Specifically, let $\tilde{h}(\eta) = h(S_1, S_2(\eta))$, where $S_2(\eta)$ denotes the collection of aggregated measurements of $\Gamma(\eta)$ and $h(S_1, S_2)$ is defined as in \eqref{equ: empirical MMD}. We compute a flow of aggregated Markov parameters $\eta(s)$ by numerically simulating the differential equation given by 
\begin{equation}
  \label{equ: gradient flow}
  \frac{\mathrm{d}}{\mathrm{d}s}\eta(s) = - \nabla \tilde{h}(\eta(s)), \quad \eta(0) = \theta\in \ell_2,
\end{equation}
where $\nabla \tilde{h}(\eta(s))$ denotes the gradient of $h$ with respect to $\eta$. Then, the MMD between $\Sigma$ and $\Gamma(\eta(s))$ is always non-increasing along the flow $\eta(s)$ since
\[
  \frac{\mathrm{d}}{\mathrm{d}s} \tilde{h}(\eta(s)) (\nabla \tilde{h})^{\intercal} \frac{\mathrm{d} \eta}{\mathrm{d}s}= - \|\nabla \tilde{h}\|_{\ell^2}^2 \leq 0.
\]
Therefore, $\eta^{\ast} = \lim_{s\to \infty}\eta(s)$ minimizes $\tilde{h}(\eta)$ to a local minimum. Although there is no guarantee that $\eta^{\ast}$ is a global minimizer of $\tilde{h}(\eta)$ since $\tilde{h}$ is in general not convex w.r.t.\ $\eta$, one can choose different initial conditions of \eqref{equ: gradient flow}, i.e., $\theta$, and simulate the aggregated flow for multiple times to obtain a better local minimum.

Now suppose that we have obtained a collection of aggregated Markov parameters $\eta^*_j$, $j\in \mathbb{N}$ such that $\tilde{h}(\eta^*)$ is sufficiently small, we can construct $A(\cdot) \in \mathbb{R}^{n\times n}$, $B(\cdot)\in \mathbb{R}^{n\times m}$, and $C(\cdot)\in \mathbb{R}^{q\times n}$ satisfying
\[
  \eta_j^* = \tfrac{1}{j!} \int_K C(\beta)A^j(\beta)B(\beta)\,\mathrm{d}\beta
\]
to realize a linear ensemble system as in \eqref{equ: linear baseline model} that approximates the unknown ensemble $\Sigma$. Indeed, there exists infinitely many linear ensemble realizations for a given set of aggregated Markov parameters, so it depends on one's prior knowledge to determine the dimension of the baseline linear ensemble, i.e., $n$. It is worthwhile to mention that when $\eta_j$'s are scalar, the minimal linear ensemble realization is always $1$-dimensional, i.e., $n = 1$. Specifically, we can set $B(\cdot) = C(\cdot) \equiv  1$, and then have $\eta_j^{\ast} = \int_{K} \tfrac{1}{j!} A^j(\beta) \,\mathrm{d}\beta$. Therefore, we can recover $A(\cdot)$ using $\eta_j^{\ast}$'s by constructing the moment generating function (MGF) of $A(\cdot)$ since
\begin{align*} 
  \sum_{j=0}^{\infty } \eta_j^* s^j &= \int_{K}\biggl(\sum_{j=0}^{\infty } \frac{1}{j!}A^j(\beta)s^j\biggr) \,\mathrm{d}\beta \\
    &= \int_K e^{A(\beta)s} \mathrm{d} \beta = \mathrm{MGF}_A(s),
\end{align*}
where the power series expansion of the moment generating function is valid as a result of Lemmas~\ref{lem: AMP.in.l2} and~\ref{lem: MGF expansion} (see Appendix).

\subsection{Clustering of Ensemble Systems}

In previous sections, we have introduced an RKHS-based ensemble recognition method to recognize whether an unknown ensemble is dynamically equivalent to a baseline ensemble by computing the MMD between the aggregated measurement distributions of two ensembles. Leveraging the idea of computing pairwise MMDs, the proposed RKHS-based approach can be extended for the purpose of clustering multiple unknown ensembles. 

In Figure~\ref{fig: ensemble clustering demo}, we illustrate the idea of clustering 4 ensembles of unknown ensembles, which can be generalized to clustering arbitrary number of ensemble systems. Specifically, we generate random control signals under a distribution $\mathbb{U}$, and apply them to all the $4$ ensembles, yielding $4$ distributions of aggregated measurements, denoted as $\mathbb{P}, \mathbb{Q}, \mathbb{S}, \mathbb{W}$, respectively. As mentioned in Section~\ref{subsec: RKHS MMD}, given a universal reproducing kernel $\rho( \cdot , \cdot ): \mathcal{F} \times \mathcal{F}\to \mathbb{R}$ and its induced RKHS $\mathcal{H}$, there exists a bijiection between the space of distributions over $\mathcal{F}$ and the RKHS $\mathcal{H}$. Hence, through the kernel mean embedding, the distributions $\mathbb{P}$, $\mathbb{Q}$, $\mathbb{S}$, $\mathbb{W}$ associated with the $4$ unknown ensembles are mapped as $4$ elements in $\mathcal{H}$, i.e., $\mu_{\mathbb{P}}$, $\mu_{\mathbb{Q}}$, $\mu_{\mathbb{S}}$, $\mu_{\mathbb{W}}$, respectively. In the RKHS $\mathcal{H}$, the distance between two kernel mean embeddings can be characterized by computing MMD using the corresponding aggregated measurements. Therefore, we can use any metric-based clustering methods, such as the agglomerative hierarchical clustering, to cluster the unknown ensembles by clustering their kernel mean embeddings in RKHS.

\begin{figure}[htbp]
  \centering
  \includegraphics[width=0.85\linewidth]{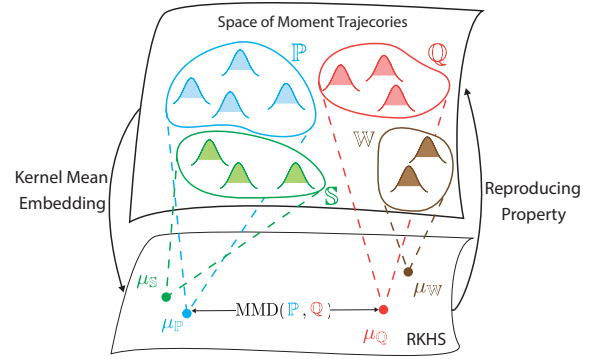}
  \caption{A demonstration of clustering 4 ensembles with unknown dynamics in an RKHS.}
  \label{fig: ensemble clustering demo}
\end{figure}

\section{Numerical Experiments}
\label{sec: experiments}

In this section, we provide various examples to illustrate the efficacy and robustness of the proposed ensemble recognition and clustering approaches in RKHS. If not specified, we select the aggregated measurements to be the statistical moments of the state trajectory $X(t, \cdot )$. Specifically, the aggregated measurements of an experiment take the form of $Y(t) = (Y_1(t), \ldots, Y_q(t))^{\intercal} \subset  \mathcal{F}$, where $\mathcal{F}$ is the set of all possible aggregated measurements as defined in \eqref{equ: possible.aggregated.measurements}, and $Y_{\alpha}(t)$, $\alpha = 1, \ldots, q$, is the $\alpha$\textsuperscript{th} moment of $X(t, \cdot)$, given by
\[
  Y_{\alpha}(t) = \frac{1}{|K|} \int_K X^\alpha (t, \beta) \,\mathrm{d}\beta.
\]
Here $X^{\alpha}(t, \beta) = (x_1^\alpha(t, \beta), \ldots , x_n^{\alpha}(t, \beta))^{\intercal}$, and $x_j(t, \beta)$, $j =1, \ldots , n$ are the $j$\textsuperscript{th} component of $X(t, \beta)$.

When computing the MMDs, we select the reproducing kernel $\rho: \mathcal{F}\times \mathcal{F} \to \mathbb{R}$ as the Gaussian-RBF kernel, that is, a kernel which satisfies that for any $x, y$ in the RKHS $\mathcal{H}$ induced by $\rho$, we have
\begin{equation}
  \label{eq:universal.kernel.experiments}
  \rho(x, y) = \mathrm{exp} \biggl( -\sigma \sum_{\alpha=1}^q \int_{0}^{T} \|x_{\alpha}(s) - y_{\alpha }(s)\|_2^2 \,\mathrm{d}s\biggr),
\end{equation}
where $\sigma>0$ and $\| \cdot \|_2$ denotes the $2$-norm in $\mathbb{R}^n$. As a result of Theorem \ref{thm: universality.GRBF} in Appendix, we have $\rho$ to be a universal reproducing kernel. Furthermore, since $0<\rho(x, y)<1$ for all $x, y\in \mathcal{H}$, the constant $C$ in \eqref{equ: MMD hypothesis test criterion} can be selected as $C = 1$. 


\begin{example}[Recognizing an ensemble using different numbers of moment trajectories]
  \label{ex:recognizing.an.ensemble.different.trajectories}
  In this example, we consider the ensemble of $2$-dimensional harmonic oscillators, given by
  \begin{equation}
    \label{equ: ex1 harmo ensemble}
    \frac{\mathrm{d}}{\mathrm{d}t}X(t, \beta)=
    \begin{bmatrix} 0 & -\beta \\ \beta & 0 \end{bmatrix}
    X(t, \beta)+
    \begin{bmatrix} u \\ v \end{bmatrix},\ \beta\in K,
  \end{equation}
  where $X(t, \cdot ) \in L^2(K, \mathbb{R}^2)$ is the state, and $u(t),\ v(t) :[0, T]\to \mathbb{R}$ are piecewise constant control signals.

  We aim to characterize the numerical performance of the proposed RKHS-based ensemble recognition approach with respect to different numbers of sampled moment trajectories. Specifically, we choose $K = [-1, 1]$, $T = 1$, and fix the initial condition of the ensemble of harmonic oscillators in \eqref{equ: ex1 harmo ensemble} as $X(0, \beta) = (1, 0)^{\intercal}$ for all $\beta \in K$. We then generate random control signals $\{u^{(k)}(t)\}$ and $\{v^{(k)}(t)\}$, $k=1, \ldots, 1000$, where each signal is discretized by a time-step of $0.02$ with the value at each time-step randomly drawn from $[-5, 5]$ under a uniform distribution. Then, we run $1000$ independent experiments by applying the control signals $u^{(k)}$, $v^{(k)}$, $k = 1, \ldots , 1000$ on the ensemble in \eqref{equ: ex1 harmo ensemble}. To collect the aggregated measurements, we randomly drew $300$ points from $[-1, 1]$ under a uniform distribution, denoted as $\beta_i$, $i = 1, \ldots, N$, then computed the first $3$ of statistical moments in each experiment using $Y_{\alpha }(t) = \tfrac{1}{300}\sum_{i=1}^{300} X^{\alpha}(t, \beta_i)$, $\alpha = 1, 2, 3$.

  We denote $S_0:=\{Y^{(k)}(t)\}_{k=1}^{1000}$ as the collection of aggregated measurements in all $1000$ experiments, and $S_0|_I$ as a subset of $S_0$ consists of $I$ random samples drawn from $S_0$ under a uniform distribution. We compute the empirical MMD between $S_0|_{I_1}$ and $S_0|_{I_2}$ with $I_1 = I_2 = I$, and varied $I$ from $1$ to $1000$. Figure~\ref{fig: MMD_vary_N} demonstrates the results of the computed empirical MMD, where the acceptance region si obtained from \eqref{equ: MMD hypothesis test criterion}, and the constant parameter in the reproducing kernel is taken as $\sigma = 1\times 10^{-3}$. Each point on the solid blue curve represents the mean plus/minus the standard deviation of $50$ independent experiments. In this experiment, we observe from Figure~\ref{fig: MMD_vary_N} that the null-hypothesis that two subsets $S_0|_{I_1}$ and $S_0|_{I_2}$ were generated from the same ensemble should be accepted at a significance level of $\alpha = 0.05$, when the sample of moment trajectories is decently large, for example $I \geq 10$.
\end{example}

\begin{figure}[htbp]
  \centering
  \includegraphics[width=0.95\linewidth]{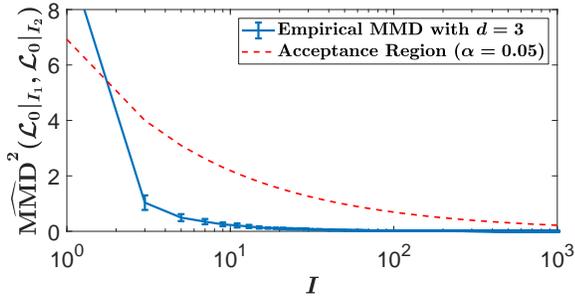} 
  \caption{Results of recognizing the same ensemble using various numbers of moment trajectories. Each point on the solid blue line is the mean plus/minus the standard deviation of $50$ independent experiments.}
  \label{fig: MMD_vary_N}
\end{figure}

\begin{example}[Recognizing an ensemble using different sets of sampled systems]
  \label{ex:recognizing.different.sets}
  
  The purpose of this example is to show that the proposed RKHS-based recognition approach is robust against different sample of $\beta_i$'s. Similar to the previous example, we apply the control signals $u^{(k)}$, $v^{(k)}$, $k =1, \ldots , 1000$ to the ensemble of harmonic oscillators in \eqref{equ: ex1 harmo ensemble}. Then, we construct two other ensembles of harmonic oscillators by changing the index range to be $K = [-1, 1]$ and $K = [-2, 2]$, and computed their statistical moments of order $1$ to $d$ as the aggregated measurements, which we denot as $S_1$ and $S_2$, respectively. In this case, the null-hypothesis of $S_i$ and $S_j$, $i, j = 0,1,2$ generated from the same ensemble should be rejected at a significance level of $\alpha = 0.05$ if $\widehat{\mathrm{MMD}}^2(S_i, S_j) > \tfrac{4\times 1}{\sqrt{1000}}\sqrt{-\ln 0.05} = 0.219$.

  The pairwise MMDs among $S_0$, $S_1$, and $S_2$ are shown in Table~\ref{tab: same ensemble with different sampled systems}, where the constant parameter in the reproducing kernel was selected as $\sigma = 1\times 10^{-3}$. As we shall observe from Table \ref{tab: same ensemble with different sampled systems}, when only considering the $1$\textsuperscript{th}-order moment, i.e., $d = 1$, all three ensembles were recognized as the same ensemble; while when higher order of moments were involved, i.e., $d = 2$ or $d = 3$, the proposed RKHS-based approach successfully distinguished the 3 different ensembles, although the samples of $\beta_i$'s were not identical in these 3 cases.
\end{example}

\begin{table*}[htb]
\centering
\renewcommand{\arraystretch}{2}
\begin{tabular}{|c|C{1cm}|C{1cm}|C{1cm}|C{1cm}|C{1cm}|C{1cm}|C{1cm}|C{1cm}|C{1cm}|}
\hline
\multicolumn{10}{|c|}{$\widehat{\mathrm{MMD}}^2(S_i, S_j)$}                \\ \hline
    & \multicolumn{3}{c|}{$d=1$} & \multicolumn{3}{c|}{$d=2$} & \multicolumn{3}{c|}{$d=3$} \\ \hline
\diagbox[height = 0.7cm]{$i$}{$j$} & $0$      & $1$      & $2$      & $0$      & $1$      & $2$      & $0$      & $1$      & $2$      \\ \hline
$0$   & 0.002  & 0.002  & 0.012  & 0.002  & 0.004  & 0.307  & 0.001  & 0.097  & 1.030  \\ \hline
$1$   & 0.002  & 0.002  & 0.012  & 0.004  & 0.002  & 0.291  & 0.105  & 0.001  & 1.034  \\ \hline
$2$   & 0.012  & 0.012  & 0.012  & 0.306  & 0.291  & 0.002  & 1.030  & 1.034  & 0.001  \\ \hline
\end{tabular}
\caption{Results of recognizing ensembles through different sampled systems, through different orders of statistical moments.}\label{tab: same ensemble with different sampled systems}
\end{table*}

Next, we present an example of inferring the aggregated Markov parameters of an ensemble system through an aggregated flow in Section~\ref{subsec: Aggregated Markov Parameters and Linear Ensemble Approximation}.

\begin{example}[Linear ensemble approximation of an unknown ensemble]
  \label{ex:Linear.ensemble.approximation}
  We consider an ensemble of linear systems indexed by $\beta\in [0.5,1]$, with its aggregated measurements given by
  \begin{equation}
    \label{equ: ex.linear.approximation.sys}
    \begin{aligned}
      \frac{\mathrm{d}}{\mathrm{d}t}
      \begin{bmatrix} x_1(t, \beta) \\ x_2(t, \beta) \end{bmatrix} &=
      \begin{bmatrix} 0 & -\beta \\ \beta & 0 \end{bmatrix}
      \begin{bmatrix} x_1(t, \beta) \\ x_2(t, \beta) \end{bmatrix} +
      \begin{bmatrix} u \\ 0 \end{bmatrix}, \\
      y(t) &= \int_{0.5}^1 x_1(t, \beta) \,\mathrm{d}\beta.
    \end{aligned}
  \end{equation}
  where the initial condition is set to be $x_1(t, \beta)=x_2(t, \beta)=0$ for all $\beta\in [0.5, 1]$.
  
  For the above ensemble system, we construct a collection of $100$ random control signals $\{u^{(k)}\}_{k=1}^{100}$, $t\in [0, 1]$, where each signal is discretized by a time-step of $0.001$, and the value at each time step is randomly drawn from $[-2, 2]$ under a uniform distribution. Then, we run $100$ independent experiments by applying $u^{(k)}$, $k=1, \ldots , 100$, and collected the corresponding aggregated measurements, denoted as $y^{(k)}(t)$. To recognize the first 10 aggregated Markov parameters of the above ensemble, we conduct the method of aggregated flow described in \eqref{equ: gradient flow}, and the results are illustrated in Figures~\ref{fig: linear_approximation_eta} and \ref{fig: linear_approximation_MMD}. When simulating the aggregated flow, we apply the first-order approximation of \ref{fig: linear_approximation_MMD}, given by
  \[
    \eta(s_{k+1}) = \eta(s_{k}) - \epsilon \nabla h(\eta(s_k)),
  \]
  with $\epsilon = 0.001$. The actual and estimated Markov parameters, denoted by $\eta_j$ and $\hat{\eta}_{j}$, respectively, after $500$ iterations, i.e., $k = 500$, are shown in Table \ref{tab:linear approximation}. Figure~\ref{fig: linear_approximation_trajs} provides the aggregated measurements collected from the ensemble in \eqref{equ: ex.linear.approximation.sys} and from the linear ensemble constructed using $\hat{\eta}_j$'s, which are denoted by $y^{(k)}(t)$ and $\hat{y}^{(k)}(t)$, $k = 1, 2,3$. For demonstration purpose, only the aggregated measurements of the first 3 experiments are plotted.
\end{example}

\begin{figure}[htbp]
  \centering
  \adjustbox{minipage=1em,valign=t}{\subcaption{}\label{fig: linear_approximation_eta}}%
  \begin{subfigure}[b]{0.45\linewidth}
    \centering\includegraphics[width=1\linewidth]{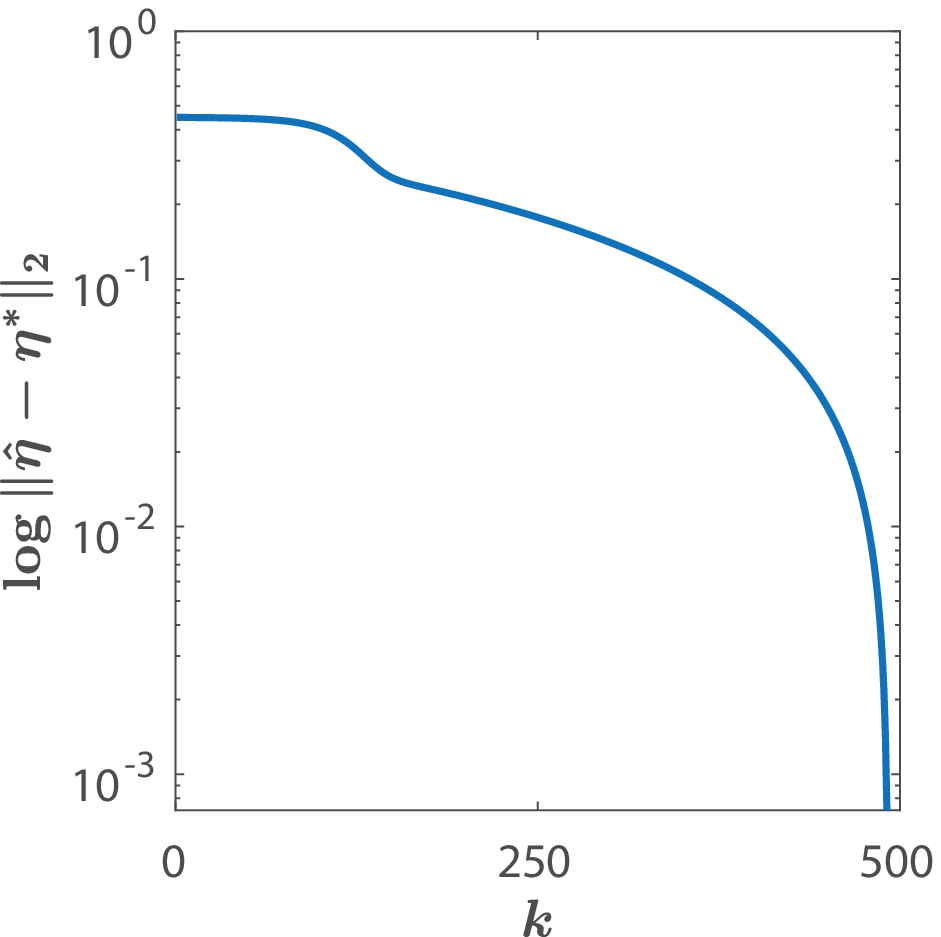}
  \end{subfigure}
  \adjustbox{minipage=1em,valign=t}{\subcaption{}\label{fig: linear_approximation_MMD}}%
  \begin{subfigure}[b]{0.45\linewidth}
    \centering\includegraphics[width=1\linewidth]{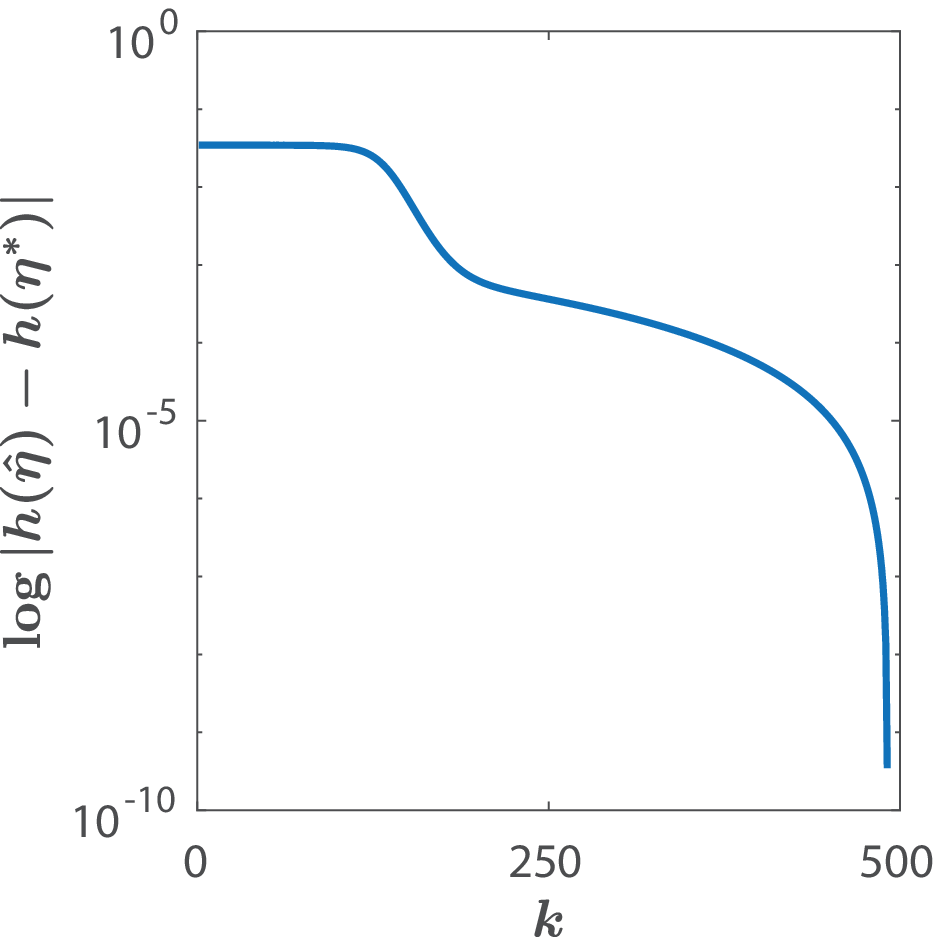} 
  \end{subfigure}

  \adjustbox{minipage=1em,valign=t}{\subcaption{}\label{fig: linear_approximation_trajs}}%
  \begin{subfigure}[b]{0.9\linewidth}
    \centering\includegraphics[width=1\linewidth]{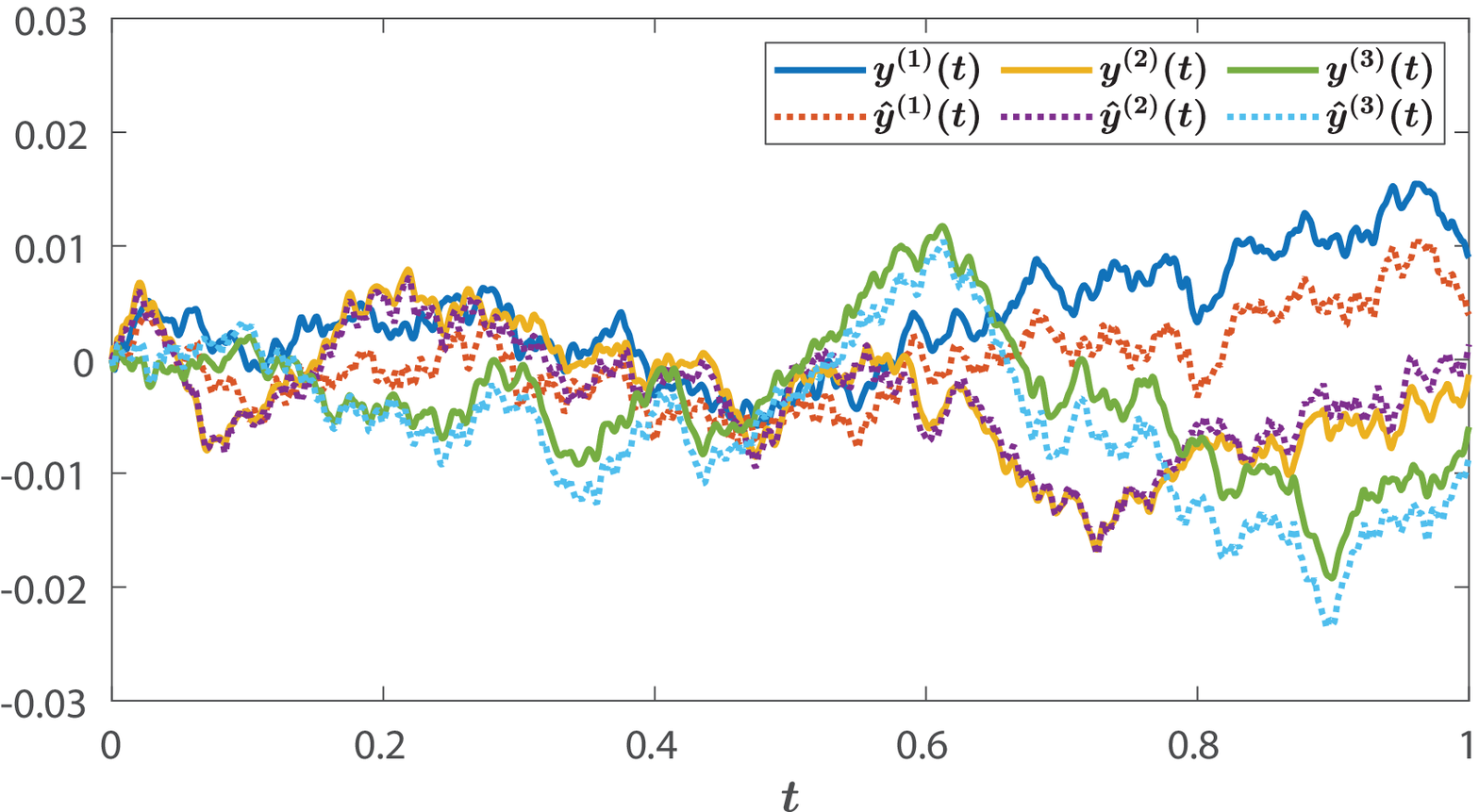} 
  \end{subfigure}
  \caption{Results of linear ensemble approximation.}
\end{figure}

\begin{table}[h]
  \renewcommand{\arraystretch}{1.5}
  \centering
  \begin{tabular}{|c|c|c|c|c|c|c|c|c|c|c|}
  \hline
  $j$ & $0$ & $1$ & $2$ & $3$ & $4$ \\
  \hline
  $\eta_j$ & $0.50$ & $0$ & $-0.15$ & $0$ & $0.01$ \\
  \hline
  $\hat{\eta}_j$ & $0.44$ & $-0.01$ & $-0.05$ & $-0.05$ & $-0.04$ \\
  \hline
  \hline
  $j$ & $5$ & $6$ & $7$ & $8$ & $9$\\
  \hline
  $\eta_j$ & $0$ & $0.00$ & $0$ & $0.00$ & $0$\\
  \hline
  $\hat{\eta}_j$ & $-0.03$ & $-0.03$ & $-0.03$ & $-0.02$ & $-0.02$\\
  \hline
  \end{tabular}
  \caption{The actual and estimated aggregated Markov parameters of the linear ensemble in \eqref{equ: ex.linear.approximation.sys}.}
      \label{tab:linear approximation}
\end{table}

The examples above in this section exhibit the efficacy and robustness of our framework in recognizing an unknown ensemble. In what follows, we will demonstrate the robustness of the proposed approach to recognize and cluster multiple different ensembles. The system dynamics of all the ensembles under consideration are displayed in Table~\ref{tab:Ensemble clustering system dynamics}.

\begin{table*}[h]
  \renewcommand{\arraystretch}{2}
  \centering
  \begin{tabular}{|C{0.3cm}|C{6.5cm}|C{6.5cm}|}
  \hline
  $j$ & System dynamics with $\beta\in K$  & Initial condition \\
  \hline
  1& \multirow{3}{*}{$\renewcommand{\arraystretch}{1.2}\frac{\mathrm{d}}{\mathrm{d}t}X(t, \beta)  = \begin{bmatrix}
    0 & -\beta \\ \beta & 0
  \end{bmatrix} X(t, \beta) + \begin{bmatrix}
    u\\ v
  \end{bmatrix}$} & $X_1(\beta) \equiv (1, 0)^{\intercal}$.  \\ \cline{1-1} \cline{3-3}
  2 & & $X_2(\beta) \equiv (0, 1)^{\intercal}$.  \\ \cline{1-1} \cline{3-3}
  3 & &  Random in $[0, 1]\times [0, 1]$ for each $\beta \in K$. \\
  \cline{1-3}
  4& \multirow{3}{*}{$\renewcommand{\arraystretch}{1.2}\frac{\mathrm{d}}{\mathrm{d}t}X(t, \beta)  = \begin{bmatrix}
    0 & -2\beta \\ 2\beta & 0
  \end{bmatrix} X(t, \beta) + \begin{bmatrix}
    u\\ v
  \end{bmatrix}$} & $X_4(\beta) \equiv (1, 0)^{\intercal}$. \\ \cline{1-1} \cline{3-3}
  5 & & $X_5(\beta) \equiv (0, 1)^{\intercal}$.  \\ \cline{1-1} \cline{3-3}
  6 & & Random in $[0, 1]\times [0, 1]$ for each $\beta \in K$. \\
  \cline{1-3}
  7& \multirow{2}{*}{$\renewcommand{\arraystretch}{1.2}\frac{\mathrm{d}}{\mathrm{d}t}X(t, \beta)  = \begin{bmatrix}
    0 & -\beta+10 \\ \beta+10 & 0
  \end{bmatrix} X(t, \beta) + \begin{bmatrix}
    u\\ v
  \end{bmatrix},$} & $X_7(\beta) \equiv (1, 0)^{\intercal}$. \\ \cline{1-1} \cline{3-3}
  8 & & $X_8(\beta) \equiv (0, 1)^{\intercal}$.  \\ 
  \cline{1-3}
  9 & $\renewcommand{\arraystretch}{1.2}\frac{\mathrm{d}}{\mathrm{d}t}X(t, \beta)  = \begin{bmatrix}
    \beta & 0\\ 0 & \beta
  \end{bmatrix} X(t, \beta) + \begin{bmatrix}
    u\\ v
  \end{bmatrix}$ & $X_9(\beta) \equiv (1, 0)^{\intercal}$.\\
  \hline
  \end{tabular}
  \caption{The system dynamics and initial conditions of the $9$ ensembles for recognition and clustering.}
      \label{tab:Ensemble clustering system dynamics}
\end{table*}

\begin{example}[Recognizing different ensembles]
  \label{ex:Recognizing.different.ensembles}
  In this part, we consider the ensemble indexed 1 - 4 in Table~\ref{tab:Ensemble clustering system dynamics}, denoted by $\Sigma_1$, \dots, $\Sigma_4$, respectively. We aim to recognize these 4 ensembles using a baseline ensemble $\Sigma_0$ indexed by $\beta \in K$, given by
  \[
    \Sigma_0 : \frac{\mathrm{d}}{\mathrm{d}t}X(t, \beta)=
    \begin{bmatrix} 0 & -\beta \\ \beta & 0 \end{bmatrix} X(t, \beta) +
    \begin{bmatrix} u \\ v \end{bmatrix},
  \]
  with the initial condition to be $X(0, \beta) \equiv (1, 0)^{\intercal}$, for all $\beta\in K$. In this experiment, we chose $K = [-10, 10]$ and $T = 1$.

  For this example, we generate a collection of $1000$ random control signals $\{u^{(k)}(t)\}_{k=1}^{1000}$, $\{v^{(k)}(t)\}_{k=1}^{1000}$, where each signal was discretized by a time-step of $0.02$ with the value at each time-step randomly drawn from $[-5, 5]$ under a uniform distribution. Then, we run $1000$ independent experiments by applying the control signals $u^{(k)}(t)$ and $v^{(k)}(t)$ on all the ensembles $\Sigma_0$, \dots, $\Sigma_4$. To collect the aggregated measurements, we randomly draw $100$ points from $K$, denoted as $\beta_i$, $i = 1, \ldots, 10$, under a uniform distribution, and computed the $1$\textsuperscript{th}-order moment as $Y^{(k)}_{j}(t) = \sum_{i=1}^{10} X^{(k)}_j(t,\beta_i)$, $j = 0, \ldots, 4$, $k = 1, \ldots , 1000$, where $X^{(k)}_j(t, \cdot)$ is the state trajectory of the $j$\textsuperscript{th} ensemble in the $k$\textsuperscript{th} experiment.

  We denote $S_j:=\{Y^{(k)}_j(t)\}_{k=1}^{1000}$, $j = 0, \ldots , 4$, as the collection of aggregated measurements from the $j$\textsuperscript{th} ensemble. In this case, the null-hypothesis of $S_i$ and $S_j$, $i, j = 0,1,2$ were generated from the same ensemble should be rejected at a significance level of $\alpha = 0.05$ if $\widehat{\mathrm{MMD}}^2(S_i, S_j) > \tfrac{4\times 1}{\sqrt{1000}}\sqrt{-\ln 0.05} = 0.219$.

  Table~\ref{tab:Ensemble recognition without calibration} displays the pairwise MMDs between $\Sigma_0$, $\Sigma_1$, \dots, $\Sigma_4$, where the constant parameter in the reproducing kernel is taken as $\sigma = 1$. It is shown in Table~\ref{tab:Ensemble recognition without calibration} that the proposed RKHS-based ensemble recognition successfully identify that $\Sigma_1 \equiv \Sigma_0$ (up to the $1$\textsuperscript{th}-order moment), while $\Sigma_2$, $\Sigma_3$, $\Sigma_4$ are not, at a significance level of $0.05$.

  \begin{table}[htb]
    \renewcommand{\arraystretch}{1.8}
    \centering
    \begin{tabular}{|C{2cm}|c|c|c|c|}
    \hline
    $j$ & $1$ & $2$ & $3$ & $4$\\
    \hline
    $\widehat{\mathrm{MMD}}^2(S_0, S_j)$ & $0.00$ & $1.59$ & $0.76 $ & $1.84$\\   
    \hline
    \end{tabular}
    \caption{The empirical MMDs between $\Sigma_0$ and $\Sigma_j$, $j = 1, \ldots ,4$ computed using $1000$ aggregate measurements.}
    \label{tab:Ensemble recognition without calibration}
  \end{table}

  If we have the prior knowledge that all the ensembles are linear, then we can calibrate the aggregated measurements to successfully identify the system dynamics of $\Sigma_0, \Sigma_1$, and $\Sigma_2$. Specifically, besides the aggregated measurements $Y_j^{(k)}(t)$, $j = 0, \ldots , 4$, $k = 1, \ldots , 1000$, we also collect the aggregated measurements of each ensemble under the null-inputs $u(t) = v(t) = 0$, denoted by $\bar{Y}_j(t)$, $j = 0\ldots , 4$. Then, we can calibrate the aggregated measurements to only identify the system dynamics, offsetting the effect of initial conditions. Since we know from linear system theory that
  \begingroup
  \allowdisplaybreaks
  \begin{align*}
    &\qquad Y_j^{(k)}(t) - \bar{Y}_j(t) = \frac{1}{100}\sum_{i=1}^{100} (X^{(k)}_j(t, \beta_i) - \bar{X}_j(t, \beta_i)) \\
    &= \frac{1}{100}\sum_{i=1}^{100} \Bigl( \Phi_j(t, 0, \beta_i)X(0, \beta_i)) \\
    &\quad +\int_0^t \Phi_j(t, s, \beta_i) B_j(s)u^{(k)}(s) \,\mathrm{d}s - \Phi_j(t, 0, \beta_i)X(0, \beta_i) \Bigr) \\
    &= \frac{1}{100}\sum_{i=1}^{100} \int_0^t \Phi_j(t, s, \beta_i) B_j(s)u^{(k)}(s) \,\mathrm{d}s,
  \end{align*}
  \endgroup
  where $X^{(k)}_j(t, \cdot )$, $j = 0, \ldots , 4$ is the state trajectory of the $j$\textsuperscript{th} ensemble under the control inputs $u^{(k)}(t)$ and $v^{(k)}(t)$; $\bar{X}_j(t, \cdot )$ is the state trajectory of the $j$\textsuperscript{th} ensemble under the null-inputs $u(t) = v(t) = 0$; $\Phi_j(t, s, \beta_i)$ and $B_j(\cdot )$ are the transition matrix and the control vector field of the $j$\textsuperscript{th} ensemble, respectively.

  Finally, we calibrate the collection of aggregated measurements as $\hat{S}_j = \{Y_j^{(k)}(t) - \bar{Y}_j(t)\}_{k = 1}^{1000}$. Table~\ref{tab:Ensemble recognition with calibration} demonstrates the MMDs between $\Sigma_0$ and $\Sigma_1$, \dots, $\Sigma_4$, computed through calibrated aggregated measurements. Following the same hypothesis test above, we conclude that $\Sigma_1$, $\Sigma_2$, and $\Sigma_3$ are dynamically equivalent to $\Sigma_0$ (up to $1$\textsuperscript{th}-order moment at a significance level of $0.05$), while $\Sigma_4$ is not.
  
  \begin{table}[htbp]
    \renewcommand{\arraystretch}{1.8}
    \centering
    \begin{tabular}{|C{2cm}|c|c|c|c|}
      \hline
      $j$ & $1$ & $2$ & $3$ & $4$\\
      \hline
      $\widehat{\mathrm{MMD}}^2(\hat{S}_0, \hat{S}_j)$ & $0.00$ & $0.01$ & $0.00 $ & $1.92$\\   
      \hline
    \end{tabular}
    \caption{The empirical MMDs between $\hat{S}_0$ and $\hat{S}_j$, computed using $1000$ calibrated aggregate measurements, provided the prior knowledge that these ensembles are linear ensembles.}
    \label{tab:Ensemble recognition with calibration}
  \end{table}
\end{example}

\begin{example}[Clustering multiple ensembles]
  \label{eq:Clustering.multiple.ensembles}
  In the last example, we aim to cluster all 9 ensembles of linear systems in Table~\ref{tab:Ensemble clustering system dynamics} using aggregated measurements.

  First let us apply the randomly generated control signals $u^{(k)}(t)$ and $v^{(k)}(t)$ ($k = 1, \ldots , 1000$) in Example~\ref{ex:Recognizing.different.ensembles} to all 9 linear ensembles. Then, we take the collections of calibrated aggregated measurements as $\hat{S}_j := \{Y_j(t)^{(k)} - \bar{Y}_j(t)\}_{k=1}^{1000}$, $j = 1, \ldots , 9$, where $Y_j(t)^{(k)}$ and $\bar{Y}_j(t)$ are the $1$\textsuperscript{th}-order moment trajectory of $\Sigma_j$ under the control inputs $[u^{(k)}(t), v^{(k)}(t)]^\intercal$ and the null-inputs $u(t) = v(t) \equiv 0$, respectively. The pairwise MMDs among all 9 ensembles are presented in Table~\ref{tab:Ensemble clustering}. Figure~\ref{fig: ensemble clustering} presents the agglomerative hierarchical cluster tree of the 9 ensembles, based on the pairwise MMDs in Table~\ref{tab:Ensemble clustering}. It is shown in Figure~\ref{fig: ensemble clustering} that the $9$ linear ensembles are clustered into $4$ clusters, namely $\{\Sigma_1, \Sigma_2, \Sigma_3\}$, $\{\Sigma_4, \Sigma_5, \Sigma_6\}$, $\{\Sigma_7, \Sigma_8\}$ and $\{\Sigma_9\}$, where the ensembles in each cluster have the same dynamics.
  
  \begin{figure}[h]
    \centering
    \includegraphics[width=0.65\linewidth]{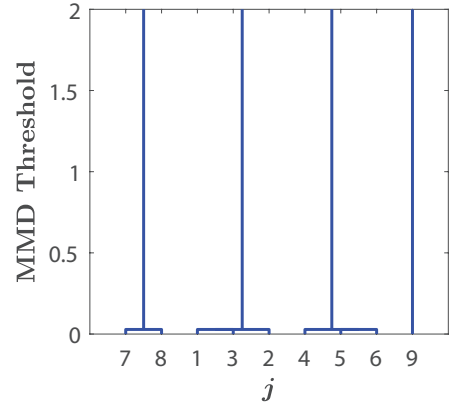}
    \caption{Agglomerative hierarchical cluster tree of the $9$ ensembles in Table~\ref{tab:Ensemble clustering system dynamics}, based on pairwise MMDs computed through calibrated moment trajectories.}
    \label{fig: ensemble clustering}
  \end{figure}
\end{example}

\begin{table*}[htbp]
  \renewcommand{\arraystretch}{1.8}
  \centering
  \begin{tabular}{|c|C{1.25cm}|C{1.25cm}|C{1.25cm}|C{1.25cm}|C{1.25cm}|C{1.25cm}|C{1.25cm}|C{1.25cm}|C{1.25cm}|}
  \hline
  \multicolumn{10}{|c|}{$\widehat{\mathrm{MMD}}^2(\hat{S}_i, \hat{S}_j)$}\\
  \hline
  \diagbox[height = 0.7cm]{$i$}{$j$}  & $1$ & $2$ & $3$ & $4$ & $5$ & $6$ & $7$ & $8$ & $9$\\
  \hline
  $1$ & $0.02$ & $0.02$  & $0.02$ & $1.71$ & $1.71$ & $1.71$ & $1.71$ & $1.71$ & $0.86$\\
  \hline
  $2$ & $0.02$ & $0.02$  & $0.02$ & $1.71$ & $1.71$ & $1.71$ & $1.71$ & $1.71$ & $0.86$\\
  \hline
  $3$ & $0.02$ & $0.02$  & $0.02$ & $1.71$ & $1.71$ & $1.71$ & $1.71$ & $1.71$ & $0.86$\\
  \hline
  $4$ & $1.71$ & $1.71$ & $1.71$ & $0.02$ & $0.02$  & $0.02$ & $1.71$ & $1.71$ & $0.85$\\
  \hline 
  $5$ & $1.71$ & $1.71$ & $1.71$ & $0.02$ & $0.02$  & $0.02$ & $1.71$ & $1.71$ & $0.85$\\
  \hline 
  $6$ & $1.71$ & $1.71$ & $1.71$ & $0.02$ & $0.02$  & $0.02$ & $1.71$ & $1.71$ & $0.85$\\
  \hline 
  $7$ & $1.71$ & $1.71$ & $1.71$ & $1.71$ & $1.71$ & $1.71$ & $0.02$ & $0.02$ & $0.86$\\
  \hline 
  $8$ & $1.71$ & $1.71$ & $1.71$ & $1.71$ & $1.71$ & $1.71$ & $0.02$ & $0.02$ & $0.86$\\
  \hline
  $9$ & $0.86$ & $0.86$ &$0.86$ &$0.85$ &$0.85$ &$0.85$ &$0.86$ &$0.86$ & $0.02$\\
  \hline
  \end{tabular}
  \caption{The pairwise MMDs among the unknown ensembles $\Sigma_j$'s computed using the calibrated aggregated measurements $\hat{S}_j$, $j = 1, \ldots ,9$, provided the prior knowledge that these ensembles are all linear ensembles. }
  \label{tab:Ensemble clustering}
\end{table*}

\section{Conclusions}
\label{sec: conclusion}

  In this paper, we propose a novel framework to learn the system dynamics of an unknown ensemble using its aggregated measurements in reproducing kernel Hilbert spaces. We demonstrate that we can identify unknown ensembles by comparing the distribution of their aggregated measurements after perturbing the systems using random control inputs. Additionally, by introducing a new notion of aggregated Markov parameters, we also provide a systematic approach to recognize and cluster unknown ensembles by linear approximation, without the need of a baseline model or any prior knowledge on the system dynamics. Ample examples are included in the paper to illustrate the efficacy and robustness of our framework.

\section*{Appendix}

In this appendix, we review some technical results regarding RKHS and universal kernels that are necessary to the development of our framework.

\begin{definition}
  \label{def: universal kernel}
  Let $\mathcal{X}$ be a compact metric space. A continuous kernel $\rho:\mathcal{X}\times \mathcal{X}\to \mathbb{R}$ is said to be universal if the RKHS $\mathcal{H}$ associated with $\rho$ is dense in $C(X)$ with respect to sup-norm, i.e., for every $f\in C(\mathcal{X})$ and $\epsilon>0$, there exists $g\in \mathcal{H}$ such that $\|f - g\|_{L^\infty } < \epsilon$.
\end{definition}

\begin{theorem}
  \label{thm: universality.GRBF}
  Let $\mathcal{X}$ be a compact metric space, $\mathcal{H}$ be a separable Hilbert space, and $\iota: \mathcal{X}\to \mathcal{H}$ be an injective map. Then, the Gaussian-RBF-type kernel $\rho:\mathcal{X}\times \mathcal{X}\to \mathbb{R}$, given by
  \begin{equation}
    \label{equ: Gaussian.RBF.kernel}
    \rho(x_1, x_2) := \exp (-\sigma \|\iota(x_1) - \iota(x_2)\|_{\mathcal{H}}^2), 
  \end{equation}
  is a universal kernel, where $\sigma >0$ and $\|\cdot \|_{\mathcal{H}}$ denotes the norm induced by inner product of $\mathcal{H}$.
\end{theorem}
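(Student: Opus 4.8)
The plan is to prove universality through its dual characterization. By Definition~\ref{def: universal kernel}, the RKHS $\mathcal{H}$ induced by $\rho$ is universal precisely when it is dense in $C(\mathcal{X})$, and by the Hahn--Banach theorem together with the Riesz representation of $C(\mathcal{X})^{*}$, this density is equivalent to the assertion that no nonzero finite signed regular Borel measure $\mu$ on $\mathcal{X}$ annihilates $\mathcal{H}$. First I would translate this annihilation condition into a statement about $\rho$ itself. Since the functions $\rho(\cdot, x)$ span a dense subspace of $\mathcal{H}$ and, through the kernel mean embedding $\Phi_\mu = \int_{\mathcal{X}}\rho(\cdot, x)\,\mathrm{d}\mu(x)$, the reproducing property gives $\langle f, \Phi_\mu\rangle_{\mathcal{H}} = \int_{\mathcal{X}} f\,\mathrm{d}\mu$ for every $f\in\mathcal{H}$, the measure $\mu$ annihilates $\mathcal{H}$ if and only if $\Phi_\mu = 0$, i.e.\ $\|\Phi_\mu\|_{\mathcal{H}}^2 = \iint \rho(x,y)\,\mathrm{d}\mu(x)\,\mathrm{d}\mu(y) = 0$. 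Thus the theorem reduces to showing that this double integral vanishes only for $\mu = 0$.

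Next I would transport the problem into $\mathcal{H}$. Observing that continuity of $\iota$ is needed for $\rho$ to be a continuous kernel in the first place, $\iota$ is a continuous injection of the compact space $\mathcal{X}$, hence a homeomorphism onto the compact image $\iota(\mathcal{X})\subset\mathcal{H}$; pushing $\mu$ forward yields a finite signed measure $\nu=\iota_{*}\mu$ with compact support $K':=\iota(\mathcal{X})$, and $\mu=0$ if and only if $\nu=0$. The double integral then becomes $\iint_{K'}\exp(-\sigma\|u-v\|_{\mathcal{H}}^2)\,\mathrm{d}\nu(u)\,\mathrm{d}\nu(v)$, so it remains to establish the \emph{integral strict positive definiteness} of the Gaussian on $\mathcal{H}$.

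The key computational step is the factorization $\exp(-\sigma\|u-v\|^2) = e^{-\sigma\|u\|^2}e^{-\sigma\|v\|^2}\sum_{n=0}^{\infty}\tfrac{(2\sigma)^n}{n!}\langle u,v\rangle_{\mathcal{H}}^n$, combined with the identity $\langle u,v\rangle^n = \langle u^{\otimes n}, v^{\otimes n}\rangle_{\mathcal{H}^{\otimes n}}$. Absorbing the Gaussian factors into the finite signed measure $\mathrm{d}\tilde\nu(u) := e^{-\sigma\|u\|^2}\,\mathrm{d}\nu(u)$ and interchanging sum and integral (justified by dominated convergence, since on $K'$ one has $|\langle u,v\rangle|^n \le R^{2n}$ with $R = \sup_{K'}\|u\|$ and $\sum_n \tfrac{(2\sigma)^n}{n!}R^{2n} = e^{2\sigma R^2} < \infty$), the double integral takes the form $\sum_{n=0}^{\infty}\tfrac{(2\sigma)^n}{n!}\bigl\|\int_{K'} u^{\otimes n}\,\mathrm{d}\tilde\nu(u)\bigr\|_{\mathcal{H}^{\otimes n}}^2$, a sum of nonnegative terms. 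Hence its vanishing forces every tensor moment $m_n := \int_{K'} u^{\otimes n}\,\mathrm{d}\tilde\nu(u)$ to equal zero, which is to say $\int_{K'}\prod_{k=1}^{n}\langle u, h_k\rangle\,\mathrm{d}\tilde\nu(u) = 0$ for all $h_1,\dots,h_n\in\mathcal{H}$ and all $n\in\mathbb{N}$.

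To finish I would invoke the Stone--Weierstrass theorem. The functions $u\mapsto\langle u, h\rangle$ together with the constants generate a subalgebra $\mathcal{A}\subset C(K')$ that contains the constants and separates points of $K'$, and is therefore dense in $C(K')$. Since the preceding step shows that every element of $\mathcal{A}$ integrates to zero against $\tilde\nu$, density yields $\int g\,\mathrm{d}\tilde\nu = 0$ for all $g\in C(K')$, and regularity of $\tilde\nu$ then forces $\tilde\nu = 0$, whence $\nu = 0$ and $\mu = 0$. I expect the main obstacle to be the infinite-dimensionality of $\mathcal{H}$: the classical route through Bochner's theorem and the everywhere-positive Gaussian spectral density is unavailable because there is no Lebesgue measure on $\mathcal{H}$. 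The tensor-power expansion, together with the point-separation supplied by injectivity of $\iota$, is precisely what sidesteps this difficulty, and some care must still be taken to justify the term-by-term integration and the duality characterization of density in $C(\mathcal{X})$.
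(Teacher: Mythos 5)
The paper does not actually prove this theorem: its ``proof'' is a one-line citation to Christmann and Steinwart (Theorem~2.2 of the cited NIPS 2010 paper). Your argument is a correct, self-contained reconstruction of essentially that result's proof strategy, and every step checks out: the Hahn--Banach/Riesz duality reduces density of the RKHS in $C(\mathcal{X})$ to integral strict positive definiteness of $\rho$; the identity $\|\Phi_\mu\|_{\mathcal{H}_\rho}^2=\iint\rho\,\mathrm{d}\mu\,\mathrm{d}\mu$ is the standard kernel-mean-embedding computation; the factorization $e^{-\sigma\|u-v\|^2}=e^{-\sigma\|u\|^2}e^{-\sigma\|v\|^2}\sum_n\frac{(2\sigma)^n}{n!}\langle u,v\rangle^n$ together with $\langle u,v\rangle^n=\langle u^{\otimes n},v^{\otimes n}\rangle$ turns the double integral into a sum of nonnegative tensor-moment norms (with the $n=0$ term correctly forcing $\tilde\nu(K')=0$, so the constants are covered); and Stone--Weierstrass on the compact image $\iota(\mathcal{X})$ finishes the job. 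Two points deserve explicit mention if this were written out in full. First, the theorem as stated assumes only that $\iota$ is injective, whereas your argument (and the original) needs $\iota$ continuous; you flag this correctly, and in fact continuity of $\rho$ -- which Definition~\ref{def: universal kernel} presupposes -- already forces continuity of $\iota$, since $\|\iota(x_n)-\iota(x)\|^2=-\sigma^{-1}\log\rho(x_n,x)\to 0$. Second, the well-definedness of the Bochner integrals $\Phi_\mu$ and $m_n$ should be noted (both follow from continuity and boundedness of the integrands on compact domains), as should the interchange $\langle f,\Phi_\mu\rangle=\int f\,\mathrm{d}\mu$. What your route buys over the citation is transparency about where injectivity of $\iota$ enters (point separation for Stone--Weierstrass) and why the finite-dimensional Bochner-theorem argument is unavailable in infinite dimensions; the tensor-power expansion is exactly the right substitute.
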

\begin{proof}
    See \cite{NIPS2010_Universal_kernel}, Theorem 2.2.
\end{proof}

\begin{lemma}\label{lem: existence of kernel mean embedding}
    Let $\mathcal{X}$ be a non-empty set, $\rho:\mathcal{X}\times \mathcal{X} \to \mathbb{R}$ be a reproducing kernel, and $\mathcal{H}$ be its associated RKHS. Given a probability distribution defined on $\mathbb{P}$, if $\rho( \cdot , \cdot )$ is measurable and $\mathbb{E}_{x}[\sqrt{\rho(x, x)}]:= \int_{\mathcal{X}} \sqrt{\rho(x, x)} \,\mathrm{d}\mathbb{P}(x) < \infty$, then the kernel mean embedding of $\mathbb{P}$, denoted as $\mu_{\mathbb{P}}$, is in $\mathcal{H}$, i.e.,
    \[
      \mu_{\mathbb{P}} = \int_{\mathcal{X}} \rho( \cdot , x) \,\mathrm{d}\mathbb{P}(x)\in \mathcal{H}.
    \]
\end{lemma}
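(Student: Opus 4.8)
The plan is to realize $\mu_{\mathbb{P}}$ as the Riesz representer of a bounded linear functional on $\mathcal{H}$, which sidesteps any separability requirement on $\mathcal{H}$. First I would define the candidate functional $T:\mathcal{H}\to\mathbb{R}$ by $T(f)=\int_{\mathcal{X}} f(x)\,\mathrm{d}\mathbb{P}(x)$ and check that it is well defined and bounded. The crucial inequality comes from the reproducing property: for every $f\in\mathcal{H}$ and $x\in\mathcal{X}$,
\[
  |f(x)| = |\langle f, \rho(\cdot, x)\rangle_{\mathcal{H}}| \le \|f\|_{\mathcal{H}}\,\|\rho(\cdot, x)\|_{\mathcal{H}} = \|f\|_{\mathcal{H}}\sqrt{\rho(x, x)},
\]
where I have used $\|\rho(\cdot,x)\|_{\mathcal{H}}^2 = \langle \rho(\cdot, x), \rho(\cdot, x)\rangle_{\mathcal{H}} = \rho(x, x)$ from Definition~\ref{def: reproducing kernel}(ii). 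Integrating against $\mathbb{P}$ and invoking the hypothesis $\mathbb{E}_x[\sqrt{\rho(x,x)}]<\infty$ then yields $|T(f)| \le \|f\|_{\mathcal{H}}\,\mathbb{E}_x[\sqrt{\rho(x,x)}]$, so $T$ is a bounded linear functional.

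Second I would apply the Riesz representation theorem to obtain a unique $\mu_{\mathbb{P}}\in\mathcal{H}$ with $T(f) = \langle f, \mu_{\mathbb{P}}\rangle_{\mathcal{H}}$ for all $f\in\mathcal{H}$. Unwinding the definition of $T$ gives $\langle f, \mu_{\mathbb{P}}\rangle_{\mathcal{H}} = \int_{\mathcal{X}} \langle f, \rho(\cdot, x)\rangle_{\mathcal{H}}\,\mathrm{d}\mathbb{P}(x)$ for every $f$, which is exactly the statement that $\mu_{\mathbb{P}}$ is the (weak) integral $\int_{\mathcal{X}} \rho(\cdot, x)\,\mathrm{d}\mathbb{P}(x)$, establishing that it lies in $\mathcal{H}$.

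The step I expect to be the main obstacle is verifying that $T$ is well defined, i.e.\ that $x\mapsto f(x)$ is $\mathbb{P}$-integrable for each fixed $f$. Integrability in absolute value is immediate from the bound above once measurability is known, so the real content is \emph{measurability} of $x\mapsto f(x) = \langle f, \rho(\cdot, x)\rangle_{\mathcal{H}}$. For $f$ of the form $\rho(\cdot, y)$ this is just $x\mapsto \rho(y, x)$, measurable by the standing assumption that $\rho$ is measurable; finite linear combinations inherit measurability, and for a general $f\in\mathcal{H}$ I would take $f_n\to f$ in $\mathcal{H}$ with each $f_n$ in the dense span of $\{\rho(\cdot, y):y\in\mathcal{X}\}$, so that $f_n(x)\to f(x)$ pointwise by continuity of the inner product, and measurability is preserved under pointwise limits.

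Finally, I note that one can upgrade this to a Bochner integral whenever $x\mapsto\rho(\cdot, x)$ is essentially separably valued, which holds automatically in our application where $\mathcal{X}=\mathcal{F}$ is a compact metric space and $\rho$ is continuous, so the map is continuous with separable range. In that setting strong measurability follows from the Pettis measurability theorem, and Bochner integrability from $\int_{\mathcal{X}}\|\rho(\cdot, x)\|_{\mathcal{H}}\,\mathrm{d}\mathbb{P}(x) = \mathbb{E}_x[\sqrt{\rho(x,x)}] < \infty$, recovering the same $\mu_{\mathbb{P}}$.
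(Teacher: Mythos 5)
Your proof is correct, and it is essentially the same argument as the one the paper relies on: the paper's ``proof'' is only a citation to Lemma~3 of the Gretton et al.\ reference, whose proof is exactly your Riesz-representer argument (bound $|f(x)|=|\langle f,\rho(\cdot,x)\rangle_{\mathcal{H}}|\le\|f\|_{\mathcal{H}}\sqrt{\rho(x,x)}$, integrate, and represent the resulting bounded functional). Your extra care with measurability via density of the kernel sections, and the remark on upgrading to a Bochner integral, go slightly beyond what the cited source records but introduce no gap.
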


\begin{proof}
  See Lemma~3 in \cite{gretton12a}.
\end{proof}

\begin{lemma}
  \label{lem: AMP.in.l2}
  Let $K\subset \mathbb{R}$ be compact, $A(\cdot )\in C(K, \mathbb{R}^{n\times n})$, $B(\cdot )\in C(K, \mathbb{R}^{n\times 1})$, and $C(\cdot )\in C(K, \mathbb{R}^{1 \times n})$. The infinite sequence $\eta = \{\eta_j\}_{j=1}^{\infty }$, where
  \[
    \eta_j = \frac{1}{j!}\int_K C(\beta) A^j(\beta)B(\beta) \,\mathrm{d}\beta
  \]
  is square-summable, i.e., $\sum_{j=0}^{\infty } \eta^2_j < \infty$, $\eta\in\ell^2$. Furthermore, it holds that
  \[
    \lim_{j\to \infty } (j!\eta_j)^{\frac{1}{j}} < \infty.
  \]
\end{lemma}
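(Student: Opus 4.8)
The plan is to reduce everything to elementary uniform bounds obtained from the compactness of $K$ together with the submultiplicativity of the operator norm. Since $A(\cdot), B(\cdot), C(\cdot)$ are continuous on the compact set $K$, each attains a finite maximum norm; write $a := \max_{\beta\in K}\|A(\beta)\|$, $b := \max_{\beta\in K}\|B(\beta)\|$, and $c := \max_{\beta\in K}\|C(\beta)\|$, where $\|\cdot\|$ is the Euclidean operator norm. The quantity $C(\beta)A^j(\beta)B(\beta)$ is a scalar, so submultiplicativity gives the pointwise estimate $|C(\beta)A^j(\beta)B(\beta)| \le c\,\|A(\beta)\|^j\,b \le c\,a^j\,b$ for every $\beta \in K$ and every $j$.

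First I would integrate this bound over $K$. Since $K$ is a compact subset of $\mathbb{R}$ it has finite Lebesgue measure $|K|$, so setting $M := c\,b\,|K|$ yields
\[
  |\eta_j| \;\le\; \frac{1}{j!}\int_K |C(\beta)A^j(\beta)B(\beta)|\,\mathrm{d}\beta \;\le\; \frac{M a^j}{j!}, \qquad j \in \mathbb{N}.
\]
For square-summability I would then sum the squares and compare against the exponential series: using $(j!)^2 \ge j!$,
\[
  \sum_{j=0}^{\infty}\eta_j^2 \;\le\; M^2\sum_{j=0}^{\infty}\frac{a^{2j}}{(j!)^2} \;\le\; M^2\sum_{j=0}^{\infty}\frac{(a^2)^j}{j!} \;=\; M^2 e^{a^2} \;<\; \infty,
\]
which establishes $\eta \in \ell^2$.

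For the second assertion I would work directly with $j!\eta_j = \int_K C(\beta)A^j(\beta)B(\beta)\,\mathrm{d}\beta$, for which the same pointwise bound gives $|j!\eta_j| \le M a^j$. Taking $j$-th roots, $|j!\eta_j|^{1/j} \le M^{1/j} a$, and since $M^{1/j} \to 1$ we obtain $\limsup_{j\to\infty}|j!\eta_j|^{1/j} \le a < \infty$. This is precisely the finiteness claimed, and it is the quantity that matters downstream: combined with the first part it forces $\limsup_j |\eta_j|^{1/j} = 0$, so the moment-generating series $\sum_j \eta_j s^j$ has infinite radius of convergence, justifying its later identification with $\int_K e^{A(\beta)s}\,\mathrm{d}\beta$ in Lemma~\ref{lem: MGF expansion}.

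The computation is routine; the only real obstacle is interpretive rather than technical. The expression $(j!\eta_j)^{1/j}$ as written is unambiguous only when $j!\eta_j > 0$, whereas $\eta_j$ may be negative or zero — indeed several $\eta_j$ vanish in the paper's own examples. I would therefore read the claim as the $\limsup$ bound on $|j!\eta_j|^{1/j}$ above; promoting it to an honest limit would require a matching lower bound, which cannot hold in general (any index $j$ with $\eta_j = 0$ drives the root to $0$), so the $\limsup$ formulation is the correct and sufficient one for all subsequent uses.
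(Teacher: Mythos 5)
Your proof is correct and follows essentially the same route as the paper's: both arguments rest on the uniform bounds supplied by continuity on the compact set $K$ and a comparison with the exponential series (the paper shows $\sum_j|\eta_j|<\infty$ via $\int_K|C|e^{|A|}|B|\,\mathrm{d}\beta$ and deduces $\ell^2$ from $\ell^1$, while you bound $|\eta_j|\le Ma^j/j!$ and sum the squares directly; the root test bound $\le\|A\|$ for the second claim is identical). Your observation that the final assertion should be read as $\limsup_{j\to\infty}|j!\eta_j|^{1/j}<\infty$ is a legitimate correction of the paper's loosely stated limit, and it is exactly the form needed for the radius-of-convergence argument in Lemma~\ref{lem: MGF expansion}.
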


\begin{proof}
    It suffices to show that $\sum_{j=0}^{\infty } |\eta_j| < \infty$. This is evident since
    \[
      \sum_{j=0}^{\infty } |\eta_j|\leq \int_{K} |C(\beta)|e^{|A(\beta)|} |B(\beta)| \,\mathrm{d} \beta < \infty,
    \]
    where $|A(\beta)|$, $|B(\beta)|$, and $|C(\beta)|$ are the element-wise absolute values of $A(\beta)$, $B(\beta)$, and $C(\beta)$, respectively; and the second inequality holds since $|C(\cdot )|e^{|A(\cdot )|} |B(\cdot )|$ is continuous.

    To show that $\lim_{j\to \infty } (j!\eta_j)^{\frac{1}{j}} < \infty $, it suffices to observe that
    \begin{align*}
      \lim_{j\to \infty } & (j!\eta_j)^{\frac{1}{j}}  = \lim_{j\to \infty } \left( \int_K C(\beta)A^j(\beta)B(\beta) \mathrm{d}\beta \right)^{\frac{1}{j}}\\
      & \leq \lim_{j\to \infty } (|K| \|C\| \|A\|^j \|B\| )^{\frac{1}{j}} = \|A\| < \infty,
    \end{align*}
    where $\|\cdot \|$ denotes the sup-norm over $K$ and $|K|:= \int_K 1 \,\mathrm{d} \beta$ is the Lebesgue measure of $K$.
\end{proof}

\begin{lemma}
  \label{lem: MGF expansion}
  Let $X$ be a random variable. If the limit $\lim_{n\to \infty} (\mathbb{E}[X^n])^{\frac{1}{n} }$ exists, then the moment generating function of $X$, denoted as $M_X$, has a power series expansion over the whole real line, i.e.
  \[
    M_X(s) := \mathbb{E}[e^{sX}] = \sum_{k=0}^{\infty} \frac{s^k}{k!} \mathbb{E}[X^k],
  \]
  for all $s\in \mathbb{R}$.
\end{lemma}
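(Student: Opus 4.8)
The plan is to establish the two assertions implicit in the statement in turn: first that the formal power series $\sum_{k=0}^{\infty}\frac{s^k}{k!}\mathbb{E}[X^k]$ converges for every $s\in\mathbb{R}$, and second that its sum coincides with $\mathbb{E}[e^{sX}]$. Both will follow once I have a uniform bound on the growth of the absolute moments of $X$.

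First I would control the moments. Write $L := \lim_{n\to\infty}(\mathbb{E}[X^n])^{1/n}$, which is finite by hypothesis. Since the even moments satisfy $\mathbb{E}[X^{2m}] = \mathbb{E}[|X|^{2m}]$, Lyapunov's inequality (monotonicity of $p\mapsto(\mathbb{E}[|X|^p])^{1/p}$) lets me bound each absolute moment by an even one, $(\mathbb{E}[|X|^k])^{1/k}\le(\mathbb{E}[|X|^{2\lceil k/2\rceil}])^{1/(2\lceil k/2\rceil)}$, whose right-hand side tends to $L$; hence $\limsup_{k\to\infty}(\mathbb{E}[|X|^k])^{1/k}\le L<\infty$. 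Combining this with Stirling's formula, $(k!)^{1/k}\sim k/e\to\infty$, I obtain
\[
  \limsup_{k\to\infty}\left(\frac{\mathbb{E}[|X|^k]}{k!}\right)^{1/k}=0.
\]
By the Cauchy--Hadamard root test this shows the series $\sum_k\frac{\mathbb{E}[X^k]}{k!}s^k$ has infinite radius of convergence and in fact converges absolutely for every $s\in\mathbb{R}$.

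Next I would justify exchanging expectation and summation. Expanding $e^{sX}=\sum_{k=0}^{\infty}\frac{s^kX^k}{k!}$ pointwise, I apply Tonelli's theorem to the nonnegative series $\sum_{k=0}^{\infty}\frac{|s|^k|X|^k}{k!}=e^{|s|\,|X|}$ to get
\[
  \mathbb{E}\bigl[e^{|s|\,|X|}\bigr]=\sum_{k=0}^{\infty}\frac{|s|^k}{k!}\mathbb{E}[|X|^k],
\]
whose right-hand side is finite by the bound of the previous step. Thus $e^{sX}\in L^1$ and the partial sums $\sum_{k=0}^{N}\frac{s^kX^k}{k!}$ are dominated in modulus by the integrable function $e^{|s|\,|X|}$, so dominated convergence permits the interchange and yields $\mathbb{E}[e^{sX}]=\sum_{k=0}^{\infty}\frac{s^k}{k!}\mathbb{E}[X^k]$ for all $s\in\mathbb{R}$.

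The main obstacle is the passage from the hypothesis, which is phrased in terms of the possibly signed moments $\mathbb{E}[X^n]$, to a usable bound on the absolute moments $\mathbb{E}[|X|^k]$: securing $\limsup_k(\mathbb{E}[|X|^k])^{1/k}<\infty$ is precisely what makes the factorial in the denominator decisive through Stirling and simultaneously supplies the integrable dominating function $e^{|s|\,|X|}$. Once this uniform control on absolute moments is in hand, both the infinite radius of convergence and the dominated-convergence interchange follow routinely.
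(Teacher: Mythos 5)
Your proof is correct, and it is in fact more complete than the paper's own argument. The step you share with the paper is the Cauchy--Hadamard computation: since $(k!)^{1/k}\to\infty$ by Stirling while the root-growth of the moments stays bounded, the series has infinite radius of convergence. But you add two ingredients the paper's proof omits. First, you pass from the hypothesis on the possibly signed moments $\mathbb{E}[X^n]$ to a genuine bound on the absolute moments $\mathbb{E}[|X|^k]$ via Lyapunov's inequality and the even-moment subsequence; the paper applies the root test directly to $\bigl(\tfrac{1}{n!}\mathbb{E}[X^n]\bigr)^{1/n}$ without absolute values, whereas Cauchy--Hadamard requires $\limsup_n |a_n|^{1/n}$, so your detour through $\mathbb{E}[|X|^{2\lceil k/2\rceil}]$ is what makes the root test legitimately applicable. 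Second, and more substantively, the paper stops after showing that the power series converges on all of $\mathbb{R}$ and never argues that its sum equals $\mathbb{E}[e^{sX}]$, which is the actual assertion of the lemma. Your Tonelli/dominated-convergence step, with $e^{|s|\,|X|}$ as the integrable dominating function (its integrability being exactly what the absolute-moment bound delivers), is what closes that gap. In short, your route subsumes the paper's and supplies the missing half of the argument.
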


\begin{proof}
  It is not hard to observe that the power series $\sum_{k=0}^{\infty} \tfrac{s^k}{k!} \mathbb{E}[X^k]$ has a radius of convergence, denoted as $R$, such that
  \begin{equation}
    \label{equ: lem MGF-1}
    \frac{1}{R} = \mathrm{limsup}_{n\to\infty} \biggl( \frac{1}{n!} \mathbb{E}\:[X^n]  \biggr)^{\frac{1}{n}}.
  \end{equation}
  Let $C = \lim_{n\to \infty} (\mathbb{E}[X^n])^{\frac{1}{n} }$. Then, \eqref{equ: lem MGF-1} is computed as
  \begin{equation}
      R = \frac{1}{C}\lim_{n\to\infty} (n!)^{\frac{1}{n}}.
  \end{equation}
  By Stirling's formula, $n! \approx \sqrt{2\pi n} n^n e^{-n}$ when $n$ is sufficiently large. Hence $(n!)^{\frac{1}{n}}\to \infty$ as $n\to \infty$, which implies that $R=\infty$. Therefore, the power series $\sum_{k=0}^{\infty} \tfrac{s^k}{k!} \mathbb{E}[X^k]$ converges for all $s \in \mathbb{R}$.
\end{proof}

  

\bibliographystyle{plain}
\bibliography{references.bib}

\end{document}